
\documentclass[twoside,11pt,letterpaper]{article}

\usepackage{jmlr2e}

\jmlrheading{13}{2012}{1-30}{2/11; Revised 1/12}{XX/12}{Chunhua Shen, Junae Kim, Lei Wang, and Anton van den Hengel}
\ShortHeadings{Metric Learning Using Boosting-like Algorithms}
{Shen, Kim, Wang and van den Hengel}

\firstpageno{1}

\usepackage{pslatex}

\usepackage{graphicx,color,url,xspace}

\graphicspath{{./}{./Fig/}{./Figs/}}

\usepackage{amssymb}
\usepackage{amsthm}
\usepackage{amsmath,amscd}
\interdisplaylinepenalty=2500

\renewcommand{\boldsymbol}[1]{\pmb{#1}}

\usepackage{verbatim}

\usepackage{multirow}
\usepackage{rotating}
\usepackage{pdflscape}

\DeclareMathAlphabet{\mathcal}{OMS}{cmsy}{m}{n}

\usepackage{algorithm2e}
\let\savedalgorithm\algorithm
\let\savedendalgorithm\endalgorithm
\newenvironment{algorithmic}{%
\savedalgorithm
}{%
\savedendalgorithm
}

\usepackage{algorithm}

\theoremstyle{plain}
\newtheorem{theorem}{Theorem}[section]

\newtheorem{lemma}{Lemma}[section]

\theoremstyle{definition}
\newtheorem{definition}{Definition}[section]

\theoremstyle{remark}

\providecommand{\newoperator}[3]{%
    \newcommand*{#1}{\mathop{#2}#3}}

\def\eg{\emph{e.g.}\xspace}
\def\ie{\emph{i.e.}\xspace}

\def\wrt{{w.r.t.}\xspace}

\def\aka{\emph{a.k.a.,}\xspace}

\def\vs{{versus}\xspace}

\def\BoostMetric{{\sc BoostMetric}\xspace}

\let\forany\forall

\def\T{{\!\top}}

\def\Real{\mathbb{R}}

\def\rank{\operatorname{\bf   Rank}}
\def\trace{\operatorname{\bf  Tr}}

\def\convhull{\operatorname{\bf   Conv}}
\def\logit{{\rm logit}}

\newoperator{\sst}{\mathrm{s.t.\!\!:}}{\nolimits}

\newoperator{\argmin}{\mathrm{argmin}}{\limits}
\newoperator{\argmax}{\mathrm{argmax}}{\limits}

\def\innerp#1#2{{\left<#1, #2 \right>}}
\def\fnorm#1#2{\left\| #2 \right\|_{ \mathrm{#1} } }

\def\psd{\succcurlyeq}

\def\btheta{\boldsymbol w}

\def\SS{{\mathcal I}}

\def\dist{{\bf dist}}

\def\PSD{p.s.d.\@\xspace}
\def\tsum{{\textstyle \sum}}

\def\A{{\mathbf A}}
\def\Z{{\mathbf Z}}
\def\H{{\mathbf H}}
\def\X{{\mathbf X}}
\def\L{{\mathbf L}}
\def\U{{\mathbf U}}

\def\ba{{\mathbf a}}
\def\bx{{\boldsymbol x}}
\def\bw{{\boldsymbol w}}
\def\bu{{\boldsymbol u}}
\def\bv{{\boldsymbol v}}
\def\bp{{\boldsymbol p}}

\def\blambda{{\boldsymbol \lambda}}

\def\I{{\mathbf I}}

\def\bbS{{\mathbb S}}
\def\bbZ{{\mathbb Z}}

\def\Sp{\mathrm{Sp}}

\def\eigenmax{{\lambda_{\mathrm{max}}}}

\def\Dot{{\color{red}{\pmb \cdot}}}

\def\cite{\citep}

\begin{document}
\title{Positive Semidefinite Metric Learning \\  Using Boosting-like Algorithms}

\author{\name Chunhua Shen \email chunhua.shen@adelaide.edu.au \\
                   \addr  The University of Adelaide, Adelaide, SA 5005, Australia
\AND
\name Junae Kim \email junae.kim@nicta.com.au \\
                   \addr  NICTA, Canberra Research Laboratory,
                          Locked Bag 8001, Canberra, ACT 2601, Australia
\AND
\name Lei Wang  \email leiw@uow.edu.au  \\
                \addr  University of Wollongong, Wollongong, NSW 2522, Australia
\AND
\name Anton van den Hengel \email anton.vandenhengel@adelaide.edu.au \\
                 \addr The University of Adelaide, Adelaide, SA 5005,  Australia
}

\editor{S\"oren Sonnenburg, Francis Bach, Cheng Soon Ong}
\maketitle

\begin{abstract}%
    The success of many machine learning and pattern recognition methods relies
    heavily upon the identification of
    an appropriate distance metric on the input data.
    It is often beneficial to learn such a metric from the input training data,
    instead of using a default one such as the Euclidean distance.
    In this work, we propose a boosting-based technique,
    termed \BoostMetric, for
    learning a quadratic Mahalanobis distance metric.
    Learning a valid Mahalanobis distance metric requires enforcing the
    constraint that the matrix parameter to the metric remains positive
    semidefinite.
    Semidefinite programming is often used to enforce this constraint,
    but does not scale well and is not easy to implement.
    \BoostMetric is instead based on the observation that
    any positive semidefinite matrix can be
    decomposed into a linear combination of
    trace-one rank-one matrices.  \BoostMetric thus
    uses rank-one positive semidefinite matrices
    as weak learners within an efficient and scalable
    boosting-based learning process.
    The resulting methods are easy to implement,
    efficient, and can accommodate various
    types of constraints.
    We extend traditional boosting algorithms in that its weak learner is a positive
    semidefinite matrix with trace and rank being one rather than a classifier
    or regressor.
    Experiments on various datasets demonstrate that the proposed algorithms compare
    favorably to those state-of-the-art methods
    in terms of classification accuracy and running time.

\end{abstract}

\begin{keywords}
      Mahalanobis distance, semidefinite programming, column generation,
      boosting, Lagrange duality, large margin nearest neighbor.
\end{keywords}

\section{Introduction}
\label{sec:intro}

The identification of an effective metric by which to measure distances between data points is an
essential component of many machine learning algorithms including $k$-nearest neighbor ($ k $NN),
$k$-means clustering, and kernel regression. These methods have been applied to a range of problems,
including image classification and retrieval
\cite{Hastie1996Adaptive,Yu2008Distance,Jian2007Metric,
Xing2002Distance,Bar2005Mahalanobis,Boiman2008,Frome2007} amongst a host of others.

The Euclidean distance has been shown to be effective in a wide variety of circumstances.  
\citet{Boiman2008}, for instance, showed that in generic object recognition with local
features,  $ k $NN with a Euclidean metric can achieve comparable or better accuracy  than more
sophisticated classifiers such as support vector machines (SVMs).  The Mahalanobis distance
represents a generalization of the Euclidean distance, and offers the opportunity to learn a
distance metric directly from the data.  This learned Mahalanobis distance approach has been shown
to offer improved performance over Euclidean distance-based approaches, and was particularly shown
by \citet{Wang2010Image} to represent an improvement upon the method of 
\citet{Boiman2008}.  It is the prospect of a significant performance improvement from fundamental
machine learning algorithms which inspires the approach presented here.

    If we let $ \ba_i, i=1,2\cdots,$ represent a set of points in $\mathbb R^D$,
then the Mahalanobis
distance, or Gaussian quadratic distance, between two points is
 \begin{equation}
 \Vert{\ba_i - \ba_j } \Vert_{\X}
 = \sqrt{( \ba_i - \ba_j  ) ^\T \X
 ( \ba_i - \ba_j  )},
 \end{equation}
where $ \X \psd 0$ is a positive semidefinite (\PSD) matrix.  The Mahalanobis
distance is thus parameterized by a \PSD matrix, and methods for learning
Mahalanobis distances are therefore often framed as constrained semidefinite
programs.  The approach we propose here, however, is based on boosting, which is
more typically used for learning classifiers.  The primary motivation for the
boosting-based approach is that it scales well, but its efficiency in dealing
with large data sets is also advantageous.  The learning of Mahalanobis distance
metrics represents a specific application of a more general method for matrix
learning which we present below.

We are interested here in the case where the training data consist of a set of
constraints upon the relative distances between data points,
            \begin{equation}
            \label{eq:SSDefn}
            \SS = \{(\ba_i, \ba_j, \ba_k) \, | \, \dist_{ij} < \dist_{ik} \},
            \end{equation}
where $\dist_{ij}$ measures the distance between $\ba_i$ and $\ba_j$. Each such constraint implies
that ``$ \ba_i$ is closer to $ \ba_j $ than $ \ba_i$ is to $ \ba_k$''.  Constraints such as these
often arise when it is known that $\ba_i$ and $\ba_j$ belong to the same class of data points while
$\ba_i, \ba_k$
            belong to different classes.
            These comparison constraints are thus often much easier to obtain
            than either the class labels or distances between data
            elements   \cite{Schultz2004Learning}. For example, in video content
            retrieval, faces extracted from successive frames at close locations
            can be safely assumed to belong to the same person, without
            requiring the individual to be identified.  In web search, the
            results returned by a search engine are ranked according to the
            relevance, an ordering which allows a natural conversion into a set
            of constraints.

            The problem of learning a \PSD matrix such as $\X$ can be formulated in terms of
            estimating a projection matrix $\L$ where $ \X = \L  \L^\T $.  This approach has the
            advantage that the \PSD constraint is enforced through the parameterization, but the
            disadvantage is that the relationship between the distance
            measure and the parameter matrix is less direct.
            In practice this approach has lead to local, rather than globally optimal
            solutions, however (see \cite{Goldberger2004Neighbourhood} for
            example).
                        
            Methods such as
            \cite{Xing2002Distance,Weinberger05Distance,Weinberger2006Unsupervised,Globerson2005Metric}
            which seek $\X$ directly are able to guarantee global optimality, but at the cost of a
            heavy computational burden and poor scalability as
            it is not trivial to preserve the semi\-de\-fi\-ni\-te\-ness
            of $ \X $ during the course of learning.
            Standard approaches such as  interior-point (IP) Newton methods need to calculate the
            Hessian.  This typically requires $O(D^4)$ storage and has worst-case computational
            complexity of approximately $ O( D^{6.5}) $ where $D$ is the size of the \PSD matrix.
            This is
            prohibitive for many real-world problems.
            An alternating projected (sub-)gradient approach is adopted in
            \cite{Weinberger05Distance,Xing2002Distance,Globerson2005Metric}.
            The disadvantages of this algorithm, however, are:
            1)  it is not easy to implement;
            2)  many parameters are involved;
            3)  usually it converges slowly.

            We propose here a method for learning a \PSD matrix labeled \BoostMetric.  
            The method is
            based on the observation that
            any positive semidefinite matrix can
            be decomposed into a linear positive
            combination of trace-one rank-one matrices.
            The weak learner in \BoostMetric is
            thus
            a trace-one rank-one \PSD matrix.
            The proposed \BoostMetric algorithm has the following desirable
            properties:
            \begin{enumerate}
            \item
            \BoostMetric is efficient and scalable.
            Unlike most existing methods,
            no semidefinite programming is
            required. At each iteration, only the largest eigenvalue
            and its corresponding eigenvector are needed.

            \item
            \BoostMetric can accommodate various types of constraints.
            We demonstrate the use of the method to learn a
            Mahalanobis distance on the basis of a set of proximity
            comparison constraints.

            \item
            Like AdaBoost, \BoostMetric does not have any parameter to tune.
            The user only needs to know when to stop.
            Also like AdaBoost it is easy to implement. No sophisticated
            optimization techniques are involved.
            The efficacy and efficiency of the proposed \BoostMetric is demonstrated
            on various datasets.

            \item
            We also propose a totally-corrective version of \BoostMetric.
            As in TotalBoost \cite{Warmuth2006Total} the weights of all the
            selected weak learners (rank-one matrices) are updated at each iteration.

            Both the stage-wise \BoostMetric and totally-corrective \BoostMetric methods
            are very easy to implement.
            \end{enumerate}

            The primary contributions of this work are therefore as
            follows:
             1)   We extend traditional boosting algorithms
             such that each weak learner is a
                matrix with the trace and rank of one---which must be positive semidefinite---rather
                than a classifier or regressor;
             2)   The proposed algorithm
             can be used to solve many
                semidefinite optimization problems in machine learning and computer vision.
                We demonstrate the scalability and effectiveness of our algorithms on metric
                learning.
            Part of this work appeared
            in \citet{Shen2008PSD,Shen2009SDP}.
            More theoretical analysis and experiments are included in this version.
            Next, we review some relevant work before we present our algorithms.

            \subsection{Related Work}

            Distance metric learning is closely related to subspace methods.
            Principal component analysis (PCA) and linear discriminant analysis
            (LDA) are two classical dimensionality reduction techniques.
            PCA finds the subspace that
            captures the maximum variance
            within
            the input data
            while LDA tries to identify the projection which maximizes the 
            between-class distance and minimizes the within-class variance.
            Locality preserving projection (LPP) finds a
            linear projection that preserves the neighborhood structure of the data set
            \cite{LPPface2005}.
            Essentially, LPP linearly approximates
            the eigenfunctions of the Laplace Beltrami operator on the underlying
            manifold. The connection between LPP and LDA is also revealed in
            \cite{LPPface2005}.
            \citet{Wang2010DLE} extended LPP to supervised multi-label
            classification.
            Relevant component analysis (RCA) \cite{Bar2005Mahalanobis}
            learns a metric from {\em equivalence} constraints.
            RCA can be viewed as
            extending
            LDA by
            incorporating must-link constraints and cannot-link constraints
            into the learning procedure.
            Each of these methods may be seen as devising a  linear projection
            from the input space to a lower-dimensional output space.
            If this projection is characterized by the matrix $\L$, then note that
            these methods may be related to the problem of interest
            here by observing 
                                   $ \X = \L  \L^\T $.
            This typically implies that $\X$ is rank-deficient.

            Recently, there
            has been significant
            research interest in
            supervised distance metric learning using
            side information that is typically presented
            in a set of pairwise constraints.
            Most of these methods,
            although
            appearing in different formats,
            share a similar essential idea:
            to learn an optimal distance metric by keeping training examples
            in equivalence constraints close, and at the same time,
            examples in in-equivalence
            constraints well separated.
            Previous work of \cite{Xing2002Distance,Weinberger05Distance,
            Jian2007Metric,Goldberger2004Neighbourhood,Bar2005Mahalanobis,
            Schultz2004Learning} fall into this category.
            The requirement
            that $\X$ must be
            \PSD has led to the
            development of a number of methods
            for learning a Mahalanobis distance
            which rely upon constrained semidefinite programing.
            This approach has a number of limitations, however,
            which we now discuss with reference to the problem of learning a
            \PSD matrix from a set of constraints upon pairwise-distance comparisons.
            Relevant work on this topic includes
            \cite{Bar2005Mahalanobis,Xing2002Distance,
            Jian2007Metric,Goldberger2004Neighbourhood,
            Weinberger05Distance,Globerson2005Metric} amongst others.

            \citet{Xing2002Distance}
            first proposed the idea of learning a
            Mahalanobis metric for clustering using convex optimization.
            The inputs are two sets: a similarity set and  a dis-similarity set.
            The algorithm maximizes the distance between points in the dis-similarity
            set under the constraint that the distance between points in the similarity
            set is upper-bounded.
            Neighborhood component analysis (NCA) \cite{Goldberger2004Neighbourhood}
            and large margin nearest
            neighbor (LMNN) \cite{Weinberger05Distance} learn a metric by maintaining
            consistency in data's neighborhood and keep a large margin
            at the boundaries of different classes.
            It has been shown in \cite{Weinberger2009Distance,Weinberger05Distance}
            that LMNN delivers the state-of-the-art
            performance among most distance metric learning algorithms.
            Information theoretic metric learning (ITML) learns a suitable metric
            based on information theoretics \cite{Davis2007Info}. To partially alleviate the
            heavy computation of standard IP Newton methods, Bregman's cyclic projection
            is used in \citet{Davis2007Info}. This idea is extended in \citet{Wang2009Info},
            which has a closed-form solution and is computationally efficient.

            There have been a number of approaches developed which aim to improve the 
            scalability of the process of learning a metric parameterized by a \PSD metric $ \X $.
            For example, \citet{Rosales2006} approximate the \PSD cone
            using a set of linear constraints based on the diagonal dominance theorem.
            The approximation is not accurate, however, in the sense that it imposes too strong
            a condition on the learned matrix---one may not want to learn a diagonally dominant
            matrix. Alternative optimization is used in \cite{Xing2002Distance,Weinberger05Distance}
            to solve the semidefinite problem iteratively. At each iteration,
            a full eigen-decomposition is applied to project the solution back onto
            the \PSD cone.
            \BoostMetric is conceptually very different to this
                        approach, and additionally only requires the calculation
                        of the first eigenvector.
             \citet{Tsuda2005} proposed to use 
             matrix logarithms and exponentials to preserve positive
             definiteness. For the application of semidefinite kernel
             learning, they designed a matrix exponentiated gradient
             method to optimize  von Neumann divergence based    
             objective functions. 
             At each iteration of matrix exponentiated gradient, a full
             eigen-decomposition is needed. In contrast, we only need
             to find the leading eigenvector.

            The approach proposed here is directly inspired by the LMNN proposed
            in~\cite{Weinberger2009Distance,Weinberger05Distance}.
            Instead of using the hinge loss,
            however,
            we use the exponential loss and logistic loss functions in order to
            derive an AdaBoost-like (or LogitBoost-like) optimization procedure.
            In theory, any differentiable convex loss function can be applied here.
            Hence, despite similar
            purposes, our algorithm differs essentially in the optimization.
            While the formulation of LMNN looks more similar to
            SVMs, our algorithm, termed \BoostMetric, largely draws upon
            AdaBoost \cite{Schapire1999Boosting}.

            Column generation  was first proposed by
            \citet{Dantzig1960CG} for solving a particular form of
            structured linear program with an extremely large number
            of variables.  The general idea of column generation is
            that, instead of solving the original large-scale problem
            (master problem), one works on a restricted master problem
            with a reasonably small subset of the variables at each
            step. The dual of the restricted master problem is solved
            by the simplex method, and the optimal dual solution is
            used to find the new column to be included into the
            restricted master problem.  LPBoost
            \cite{Demiriz2002LPBoost} is a direct application of
            column generation in boosting.  Significantly, LPBoost
            showed that in an LP framework, unknown weak hypotheses
            can be learned from the dual although the space of all
            weak hypotheses is infinitely large.  \citet{Dual2010Shen}
            applied column generation to boosting with general loss
            functions.  It is these results that underpin
            \BoostMetric.

        The remaining content is organized as follows. In Section \ref{sec:preliminaries}
        we present some preliminary mathematics. In Section \ref{sec:algorithm},
        we show the main results.
        Experimental results are provided in Section \ref{sec:exp}.

\section{Preliminaries}
        \label{sec:preliminaries}

        We introduce some fundamental concepts that are necessary
        for setting up our problem. First, the notation used in this paper is as follows.

        \subsection{Notation}

        Throughout this paper, a matrix is denoted by a bold upper-case
        letter ($\X$); a column vector is denoted by a bold lower-case
        letter ($ \bx $).
        The $ i$th row of $\X $ is denoted by $ \X_{i:} $ and the $
        i$th column $ \X_{:i}$.
        $ \boldsymbol 1 $ and
        $ \boldsymbol 0 $ are column vectors of $ 1$'s and $ 0$'s,
        respectively. Their size should be clear from the context.
        We denote the space of $ D \times D $ symmetric matrices by $
        \bbS^D$, and positive semidefinite matrices by $ \bbS^D_+$. $
        \trace(\cdot) $ is the trace of a symmetric matrix and
            $
            \innerp{\X}{\Z} = \trace(\X\Z^\T) = \sum_{ij}\X_{ij}\Z_{ij}
            $
        calculates the inner product of two matrices.  An element-wise
        inequality between two vectors like $ \bu \leq \bv $ means $
        u_i \leq v_i $ for all $ i $.
        We use $ \X \psd 0 $
        to indicate that matrix $ \X $ is positive semidefinite.
        For a matrix $ \X  \in \bbS^D$,
        the following statements are equivalent:
        1) $ \X \psd 0 $ ($ \X \in \bbS^D_+$);
        2) All eigenvalues of $ \X$ are nonnegative
        ($\lambda_i(\X) \geq0$, $ i = 1,\cdots,D$);
        and 3) $\forany \bu \in \Real^D$, $ \bu^\T \X \bu \geq 0$.

    \subsection{A Theorem on Trace-one Semidefinite Matrices}

        Before we present our main results,
        we introduce an important theorem
        that serves the theoretical basis of \BoostMetric.

\begin{definition}
   \label{def:convexhull}
   For any positive integer $ m $,
   given a set of points
              $\{ \bx_1,...,\bx_m \} $
   in a real vector or matrix space $ \Sp $,
   the {\em convex hull} of $ \Sp $ spanned by $ m $
   elements in $ \Sp $
   is defined as:   $$
   \convhull_m ( \Sp ) =
   \left\{
   {\tsum_{i=1}^m w_i \bx_i}
   \Bigl| \,  w_i \geq 0,
   {\tsum_{i=1}^m w_i = 1},
   \bx_i \in \Sp
   \Bigr.
   \right\}.
   $$
   Define the linear  convex span of $ \Sp $
   as:\footnote{With slight abuse of notation,
   we also use the symbol $\convhull(\cdot)$ to denote
    convex span. In general it is not a convex hull.}
   \begin{align*}
   &\convhull( \Sp ) =
        \bigcup_m \convhull_m ( \Sp ) 
   =
   \left\{
   { \tsum_{i=1}^m w_i \bx_i}
   \Bigl| \,  w_i \geq 0,
   { \tsum_{i=1}^m w_i = 1},
   \bx_i \in \Sp,
   m \in \bbZ_+
   \Bigr.
   \right\}.
   \end{align*}
   Here $ \bbZ_+$ denotes the set of all positive integers.
   \end{definition}
   \begin{definition}
      Let us define $ \Gamma_1 $ to be the space of all positive semidefinite
      matrices $ \X \in \bbS^D_+ $ with trace equaling one:
      $$
      \Gamma_1 = \left\{ \X
      \left| \,
      \X \psd 0, \trace(\X) = 1
      \right.
      \right\};
      %
      %
      %
      %
      %
      %
      %
      %
      $$
      and
      $ \Psi_1 $ to be the space of all positive semidefinite
      matrices with both trace and rank equaling one:
      $$
      \Psi_1 = \left\{ \Z
      \left| \,
      \Z \psd 0, \trace(\Z) = 1, \rank(\Z) = 1
      \right.
      \right\}.
      $$
      We also define
      $ \Gamma_2 $ as the convex hull of $ \Psi_1$, \ie,
      $$
                  \Gamma_2 = \convhull(\Psi_1).
      $$
\end{definition}
\begin{lemma}
      Let $ \Psi_2 $ be a convex polytope defined as
      $  \Psi_2 = \{ \blambda  \in \Real^D | \, \lambda_k \geq 0
      $, $ \forall k = 0,\cdots,D$, $ \sum_{k=1}^D  \lambda_k = 1 \}$,
      then the points with only one element equaling one and all the others
      being zeros are the extreme points (vertexes) of $ \Psi_2 $.
      All the other points can not be extreme points.
\label{lem:main0}
\end{lemma}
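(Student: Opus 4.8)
The plan is to recognize $\Psi_2$ as the standard probability simplex in $\Real^D$ and to verify the two assertions directly from the definition of an extreme point: a point $\blambda \in \Psi_2$ is \emph{extreme} if it cannot be written as $\theta \bu + (1-\theta)\bv$ with $\bu, \bv \in \Psi_2$, $\bu \neq \bv$, and $\theta \in (0,1)$. I would split the proof into the two claims in the statement.

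First, to show each candidate vertex is extreme, I would fix $\blambda = \boldsymbol e_p$, the vector with a $1$ in position $p$ and zeros elsewhere, and suppose $\boldsymbol e_p = \theta \bu + (1-\theta)\bv$ for some $\bu, \bv \in \Psi_2$ and $\theta \in (0,1)$. Reading off each coordinate $k \neq p$ gives $0 = \theta u_k + (1-\theta) v_k$; since $\theta, 1-\theta > 0$ and $u_k, v_k \geq 0$, each summand must vanish, forcing $u_k = v_k = 0$ for all $k \neq p$. The normalization $\tsum_k u_k = \tsum_k v_k = 1$ then forces $u_p = v_p = 1$, so $\bu = \bv = \boldsymbol e_p$. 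Hence no nontrivial convex decomposition exists, and $\boldsymbol e_p$ is an extreme point.

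Second, to show every other point fails to be extreme, I would take any $\blambda \in \Psi_2$ with at least two strictly positive coordinates, say $\lambda_p > 0$ and $\lambda_q > 0$ with $p \neq q$; every point that is not one of the $\boldsymbol e_i$ has this form, since the coordinates are nonnegative and sum to one. Setting $\boldsymbol d = \boldsymbol e_p - \boldsymbol e_q$, I would consider the two perturbed points $\blambda \pm \epsilon \boldsymbol d$. Because $\tsum_k d_k = 0$, both still sum to one, and for $\epsilon$ small enough (e.g.\ $\epsilon < \min\{\lambda_p, \lambda_q\}$) both remain coordinatewise nonnegative, hence lie in $\Psi_2$. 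They are distinct, and the identity $\blambda = \tfrac12(\blambda + \epsilon\boldsymbol d) + \tfrac12(\blambda - \epsilon\boldsymbol d)$ exhibits $\blambda$ as a nontrivial convex combination, so $\blambda$ is not extreme.

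There is no serious obstacle here; the only step requiring a little care is the perturbation argument, where one must confirm that both the sum-to-one constraint and the nonnegativity constraints are preserved so that $\blambda \pm \epsilon \boldsymbol d$ genuinely belong to $\Psi_2$. Taken together, the two parts establish that the extreme points of $\Psi_2$ are exactly the standard basis vectors, as claimed.
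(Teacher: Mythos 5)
Your proof is correct and follows essentially the same route as the paper: both parts are direct verifications from the definition of an extreme point, and your first part mirrors the paper's coordinate-wise argument (the paper uses an $m$-term convex combination where you use the two-point form, but the step where nonnegativity forces the off-index coordinates to vanish is identical). The only cosmetic difference is in the second part, where the paper exhibits a point with two or more positive coordinates as a nontrivial convex combination of the basis vectors while you perturb along $\boldsymbol e_p - \boldsymbol e_q$; both are standard and equally valid.
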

\begin{proof}
      Without loss of generality, let us consider such a point $ \blambda'
      = \{1, 0, \cdots, 0\} $.
      If $ \blambda'$ is not an extreme point of $ \Psi_2 $, then it
      must be 
      possible to express it as a
      convex combination of a set of
      {\em other} points
      in $ \Psi_2 $: $ \blambda' = \sum_{i=1}^m w_i \blambda^i
      $, $ w_i > 0 $, $ \sum_{i=1}^m  w_i = 1$ and
      $ \blambda^i \neq \blambda'$.
      Then we have equations:
      $ \sum_{i=1}^m  w_i \lambda^i_k = 0$, $\forany k =
      2,\cdots, D $. It follows that $ \lambda^i_k = 0 $,
      $ \forany i $ and $ k = 2, \cdots, D$.
      That means, $ \lambda^i_1 = 1 $  $ \forany i $. This is
      inconsistent with $ \blambda^i \neq \blambda' $.
      Therefore  such a convex combination does not exist and
      $ \blambda' $ must be an extreme point.
      It is trivial to see that any $ \blambda $ that has more than
      one active element is an convex combination of the above-defined
      extreme points. So they can not be extreme points.
\end{proof}

\begin{theorem}
\label{thm:main1}
      $\Gamma_1$ equals to $\Gamma_2$; \ie,
       $ \Gamma_1 $ is also the convex hull of $ \Psi_1 $.
       In other words, all $ \Z \in \Psi_1 $,
       form the set of  extreme points of $ \Gamma_1 $.
\end{theorem}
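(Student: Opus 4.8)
The plan is to establish the set equality $\Gamma_1 = \Gamma_2$ by proving two inclusions, and then separately to identify the extreme points. The inclusion $\Gamma_2 \subseteq \Gamma_1$ is the routine direction: any $\X = \sum_i w_i \Z_i$ with $\Z_i \in \Psi_1$, $w_i \geq 0$ and $\sum_i w_i = 1$ is a nonnegative combination of \PSD matrices, hence \PSD, and by linearity of the trace $\trace(\X) = \sum_i w_i \trace(\Z_i) = \sum_i w_i = 1$, so $\X \in \Gamma_1$.

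For the reverse inclusion $\Gamma_1 \subseteq \Gamma_2$, the key tool is the spectral theorem. Given $\X \in \Gamma_1$, I would write its eigendecomposition $\X = \sum_{k=1}^D \lambda_k \bv_k \bv_k^\T$ with orthonormal eigenvectors $\bv_k$. Because $\X \psd 0$ we have $\lambda_k \geq 0$, and because $\trace(\X) = \sum_k \lambda_k = 1$ the eigenvalue vector $\blambda = (\lambda_1,\dots,\lambda_D)$ lies in the simplex $\Psi_2$ of Lemma~\ref{lem:main0}. Each matrix $\bv_k \bv_k^\T$ is \PSD, has unit trace (since $\|\bv_k\| = 1$), and rank one, hence $\bv_k \bv_k^\T \in \Psi_1$. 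Thus $\X$ is exhibited as a convex combination of elements of $\Psi_1$, giving $\X \in \convhull(\Psi_1) = \Gamma_2$.

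Having shown $\Gamma_1 = \Gamma_2$, it remains to verify that the extreme points of $\Gamma_1$ are exactly the members of $\Psi_1$. Here I would transfer Lemma~\ref{lem:main0} through the spectral map $\blambda \mapsto \sum_k \lambda_k \bv_k \bv_k^\T$: a matrix of rank at least two has at least two strictly positive eigenvalues, so its eigendecomposition is a nontrivial convex combination of distinct rank-one matrices, and it therefore cannot be extreme --- this mirrors the statement in the Lemma that any simplex point with more than one active coordinate is a convex combination of vertices. Conversely, to prove every $\Z = \bv\bv^\T \in \Psi_1$ is extreme, I would suppose $\Z = w\X_1 + (1-w)\X_2$ with $\X_1,\X_2 \in \Gamma_1$ and $0 < w < 1$; for any $\bu$ orthogonal to $\bv$ we have $0 = \bu^\T \Z \bu = w\,\bu^\T \X_1 \bu + (1-w)\,\bu^\T \X_2 \bu$, and nonnegativity of each term forces $\X_1, \X_2$ to vanish on $\bv^\perp$, so each is a multiple of $\bv\bv^\T$; the trace-one constraint then pins down $\X_1 = \X_2 = \Z$.

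The main obstacle I anticipate is the extreme-point characterization rather than the set equality, which follows almost immediately once the spectral theorem is invoked. The delicate step is the argument that each rank-one matrix is genuinely extreme: it requires exploiting the \PSD structure --- so that a vanishing quadratic form on a subspace forces the matrix to be supported only on its complement --- rather than merely manipulating eigenvalues, and it is this geometric fact, not the purely combinatorial content of Lemma~\ref{lem:main0}, that does the real work.
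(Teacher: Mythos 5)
Your proof is correct, and it takes a genuinely different route from the paper's. The paper proves only the easy inclusion $\Gamma_2 \subseteq \Gamma_1$ directly, then narrows the candidate extreme points of $\Gamma_1$ to rank-one matrices via the eigenvalue simplex and Lemma~\ref{lem:main0}, argues by a symmetry consideration that either all rank-one matrices are extreme or none are, and finally invokes the Krein--Milman theorem (compactness plus convexity) both to guarantee that extreme points exist and to recover $\Gamma_1$ as the convex hull of those extreme points. You instead obtain the reverse inclusion $\Gamma_1 \subseteq \Gamma_2$ in one stroke from the spectral theorem --- the eigendecomposition of $\X \in \Gamma_1$ \emph{is} the required convex combination of elements of $\Psi_1$ --- which makes the set equality entirely elementary and dispenses with Krein--Milman. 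You then verify extremality of each $\bv\bv^\T$ directly, using the fact that a \PSD matrix whose quadratic form vanishes on $\bv^\perp$ must be a nonnegative multiple of $\bv\bv^\T$; this is a genuine strengthening, since the paper's claim that all rank-one matrices are extreme rests on an informal indistinguishability argument that only works in tandem with Krein--Milman's guarantee that \emph{some} extreme point exists. Your diagnosis is also apt: the paper's use of Lemma~\ref{lem:main0} handles only the combinatorial part (excluding rank $\geq 2$), while the positive assertion that rank-one matrices are extreme is exactly where the \PSD geometry enters, and your proof makes that step explicit where the paper leaves it implicit. The trade-off is that the paper's argument generalizes more readily to other spectrally-defined convex bodies where a clean explicit decomposition may not be available, whereas yours is more self-contained and constructive for this particular set.
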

\begin{proof}
   It is easy to check that any convex combination $ \sum_i{ w_i
   \Z_i }$,
   such that $ \Z_i \in \Psi_1$,
   resides in $ \Gamma_1 $, with the following two facts: 
   1)
   a convex combination of \PSD matrices is still a \PSD matrix; 
   2)
   $\trace\bigl(\sum_i{ w_i \Z_i }\bigr) =
    \tsum_i  w_i \trace( \Z_i)  =1 $.

   By denoting $ \lambda_1 \geq \cdots \geq \lambda_D \geq 0
   $ the eigenvalues of a $ \Z \in \Gamma_1$, we know that $
   \lambda_1 \leq 1$ because $ \sum_{i=1}^D \lambda_i = \trace(\Z) =
   1$. Therefore, all eigenvalues of $ \Z$ must satisfy: $ \lambda_i
   \in [0,1]$, $ \forall i=1,\cdots,D$ and $ \sum_i^D \lambda_i = 1$.
   By looking at the eigenvalues of $ \Z $ and using
   Lemma~\ref{lem:main0},
   it is immediate to see that a matrix $ \Z $ such that
   $ \Z \psd 0 $, $ \trace(\Z)=1$ and $\rank(\Z) > 1 $ can not
   be an extreme point of $ \Gamma_1 $.
   The only candidates for extreme points are those rank-one matrices
   ($\lambda_1 = 1 $ and $ \lambda_{2,\cdots,D} = 0$).
   Moreover, it is not possible that some rank-one matrices are
   extreme points and others are not because the other two constraints
   $ \Z \psd 0 $ and $ \trace( \Z ) = 1 $ do not distinguish
   between different rank-one matrices.

   Hence,  all $ \Z \in \Psi_1 $
       form the set of  extreme points of $ \Gamma_1 $.
   Furthermore, $ \Gamma_1 $ is a convex and compact set, which must
   have extreme points.
   The Krein-Milman Theorem \cite{Krein1940Extreme} tells us that a convex
   and compact set is equal to the convex hull of its extreme points.
\end{proof}
   This theorem is a special case of the results from
   \cite{Overton1992Sum} in the context of eigenvalue optimization.
   A different proof for the above theorem's general version  can also
   be found in \cite{Fillmore1971}.

   In the context of semidefinite optimization, what is of interest about
   Theorem~\ref{thm:main1} is as follows: it tells us that a bounded
   \PSD matrix constraint $\X \in \Gamma_1 $ can be equivalently
   replaced with a set of constrains which belong to $ \Gamma_2 $.  At
   the first glance, this is a highly counterintuitive proposition
   because $ \Gamma_2 $
   involves many more complicated constraints. Both $w_i $ and
   $ \Z_i$ ($\forany i = 1,\cdots,m$) are unknown variables. Even
   worse, $ m $ could be extremely (or even infinitely) large.
        Nevertheless, this is the type of problems that
        {\em boosting} algorithms are designed to solve.
        Let us give a brief overview of boosting algorithms.

\subsection{Boosting}

      Boosting is an example of ensemble learning, where multiple
      learners are trained to solve the same problem.
      Typically a boosting algorithm \cite{Schapire1999Boosting}
      creates a single strong learner
      by incrementally adding base (weak) learners to the final strong
      learner. The base learner has an important impact on the strong
      learner.
      In general, a boosting algorithm builds on a user-specified
      base learning procedure and runs it repeatedly on modified data
      that are outputs from the previous iterations.

      The general form of the boosting algorithm 
      is sketched in
       Algorithm~\ref{alg:BOOST}.
      The inputs to a boosting algorithm are a set of training example
      $ \bx $, and their corresponding class labels $y$.
      The final output is a strong classifier which takes the form
      \begin{equation}
      F_{\btheta} (\bx)  = \tsum_{ j =1}^J w_j h_j(\bx).
      \label{eq:boost0}
      \end{equation}
      Here $ h_j( \cdot )$ is a base learner.
      From Theorem~\ref{thm:main1}, we know that a matrix
      $ \X \in \Gamma_1$ can be decomposed as
      \begin{equation}
        \X = \tsum_{ j=1 }^J w_j
        \Z_j,  \Z_j \in \Gamma_2.
        \label{eq:sdp0}
      \end{equation}
      By observing the similarity between Equations~\eqref{eq:boost0}
      and \eqref{eq:sdp0}, we may view $ \Z_j $ as a weak classifier
      and the matrix $ \X $ as the strong classifier that  we want to learn.
      This is exactly the problem that boosting methods have been
      designed to solve.
      This observation inspires us to solve a special type of
      semidefinite optimization problem
      using boosting techniques.

      The sparse greedy approximation algorithm proposed by
      \citet{Zhang2003Sequential} is an efficient method for solving a
      class of convex problems, and achieves fast convergence rates.
      It has also been shown  that boosting algorithms can be
      interpreted within the general framework of
      \cite{Zhang2003Sequential}.  The main idea of sequential greedy
      approximation, therefore, is as follows.  Given an
      initialization $ \bu_0 $, which is in a convex
      subset of a linear vector space,
      a matrix space or a functional space,
      the algorithm finds $\bu_i $ and $ \lambda \in (0, 1) $
      such that the objective function
      $ F ( (1 -  \lambda ) \bu_{i - 1} + \lambda \bu_i ) $
      is minimized.
      Then the solution $ \bu_i $ is updated as
      $ \bu_i = (1 -  \lambda ) \bu_{i-1} + \lambda \bu_i $ and the
      iteration goes on. Clearly, $ \bu_i $ must remain in the original space.
      As shown next, our first case, which learns a metric using
      the hinge loss, greatly resembles this idea.

      \SetVline
      \linesnumbered
      \begin{algorithm}[t]
      \caption{The general framework of boosting.}
      \begin{algorithmic}
      \KwIn{Training data.}
      {
      Initialize a weight set $ \bu  $ on the training examples\;
      }
     \For{$ j = 1,2,\cdots,$}
       {
         $ \Dot $  Receive a weak hypothesis $ h_j(\cdot)$\;
         $ \Dot $  Calculate $ w_j > 0$\;
         $ \Dot $  Update $\bu$.
     }
     \KwOut{
         A convex combination of the weak h\-y\-p\-o\-th\-e\-s\-es:
         $ F_{\bw} (\bx)  = \tsum_{j=1}^J w_j h_j(\bx)$.
      }
   \end{algorithmic}
   \label{alg:BOOST}
   \end{algorithm}

\subsection{Distance Metric Learning Using Proximity Comparison}

        The process of measuring distance using a
        Mahalanobis metric is equivalent to
        linearly transforming the data by a projection matrix $\L \in
        \mathbb R^{D \times d}$ (usually $ D \geq d $)
        before calculating the standard Euclidean distance:
			  \begin{align}
            \label{EQ:1}
            \dist_{ij}^2 &=\|\L^\T \ba_i - \L^{\T} \ba_j\|^2_2  
             = (\ba_i - \ba_j)^{\T} \L \L^{\T} (\ba_i - \ba_j)
             = (\ba_i - \ba_j)^{\T} \X (\ba_i - \ba_j).
        \end{align}
        As described above, the problem of learning a Mahalanobis metric 
        can be approached in terms of learning the matrix $\L$, or the \PSD
        matrix $\X$.
        If $ \X = \I $, 
        the Mahalanobis distance
        reduces to the Euclidean distance. If $ \X$
        is diagonal, the problem corresponds to learning a metric in
        which  different features are given different weights, \aka
        feature weighting.
		Our approach is to learn a full \PSD matrix $\X$, 
        however, using \BoostMetric.

         In the framework of large-margin learning,
         we want to maximize the distance between $\dist_{ij}$ and
         $\dist_{ik}$. That is, we wish to make $\dist_{ik}^2 -
         \dist_{ij}^2 = (\ba_i - \ba_k)^{\T} \X (
         \ba_i - \ba_k) - (\ba_i - \ba_j)^{\T}
         \X ( \ba_i - \ba_j)$ as large as possible under some
         regularization.
         To simplify notation, we rewrite the distance between
         $\dist_{ij}^2$ and $\dist_{ik}^2$ as
         $
          {\dist}_{ik}^2 -
          {\dist}_{ij}^2 = \innerp{\A_r}{\X},
         $
         where
         \begin{equation}
         \A_r =
         (\ba_i - \ba_k) (
         \ba_i - \ba_k)^{\T} - (\ba_i - \ba_j)
         ( \ba_i - \ba_j)^{\T},
             \label{EQ:Ar}
         \end{equation}
         for
         $r = 1,\cdots,|\SS|$
         and
         $ |\SS| $ is the size of the set of constraints $ \SS$
         defined in Equation~\eqref{eq:SSDefn}.

\section{Algorithms}
\label{sec:algorithm}

    In this section, we define the optimization problems for metric learning.
    We mainly investigate the cases using the hinge loss, exponential
    loss and logistic loss
    functions.
    In order to derive an efficient optimization
    strategy, we look at their Lagrange dual problems and design
    boosting-like approaches for efficiency.

\subsection{Learning with the Hinge Loss}
\label{sec:hinge}

      Our goal is to derive a general algorithm for \PSD matrix
      learning with the hinge loss function.  Assume that we want to
      find a \PSD matrix $ \X \psd 0$ such that a set of constraints
      \[ \innerp{\A_r}{\X} > 0, r = 1,2,\cdots, \] are satisfied as
      {\em well} as possible.  Here $ \A_r $ is as defined in
      \eqref{EQ:Ar}.  These constraints need not all be strictly
      satisfied and thus we define the margin $\rho_r =
      \innerp{\A_r}{\X}$, $ \forany r$.

      Putting it into the maximum margin learning framework, we want
      to minimize the following trace norm regularized objective
      function:
      $
           \sum_r F ( \innerp{\A_r}{\X} )  + v \trace(\X),
      $
     with  $ F ( \cdot ) $ a convex loss function and $v$ a
     regularization constant.   Here we have used the trace norm
     regularization. 
     Of course a Frobenius norm regularization term can also be used
     here.  Minimizing the Frobenius norm $ || \X ||_{\rm F}^2  $,
     which is equivalent to minimize the $ \ell_2 $ norm of the
     eigenvalues of $ \X $, penalizes a solution that is far away from
     the identity matrix.
     With the hinge loss, we can write the optimization problem as: 
    \begin{align}
        \label{EQ:hinge_a}
        \max_{ \rho, \X,  {\boldsymbol \xi} } 
        \;   \rho -   v   \tsum_{r=1}^{ |\cal I |  }   
                      \xi_r, \,\, \sst 
                      \,
                      \innerp{\A_r}{\X} \geq \rho - \xi_r, \forany r;
                      \X \psd 0, \trace(\X) = 1; 
                      \; {\boldsymbol \xi}
                      \geq {\boldsymbol 0}. 
    \end{align}
     Here $ \trace(\X) = 1 $ removes the scale ambiguity because the
     distance inequalities are scale invariant.

      We can decompose $ \X$ into:
      $
           \X = \tsum_{j =1}^J  w_j \Z_j,
      $
      with
      $w_j >  0$, $\rank(\Z_j) = 1$ and
      $\trace(\Z_j)= 1$, $ \forany j$.
      So we have
      \begin{align}
             \innerp{\A_r}{\X}
                  = \innerp{\A_r}{ \tsum_{j=1}^J w_j\Z_j }
                  = \tsum_{j=1}^J w_j \innerp{\A_r}{\Z_j}
                  = \tsum_{j=1}^J w_j \H_{rj} = \H_{ r: } \bw, \forany r.
      \label{EQ:symb}
      \end{align}
        Here $\H_{rj}$ is a shorthand for $\H_{rj} =
        \innerp{\A_r}{\Z_j}$.
        Clearly, $ \trace(\X) = {\boldsymbol 1}^\T \bw $.
        Using Theorem   
        \ref{thm:main1}, we  replace the p.s.d.\   conic constraint 
        in the primal \eqref{EQ:hinge_a} with a linear convex
        combination of rank-one unitary matrices: 
      $
           \X = \tsum_j 
               w_j \Z_j,
      $
      and $ {\boldsymbol 1}^\T \bw = 1$. 
      Substituting $ \X $ in \eqref{EQ:hinge_a}, we have 
      \begin{align}
          \max_{ \rho, \bw, {\boldsymbol \xi}}  
          \, \rho - v \tsum_{r=1} ^ { |\SS | } \xi_r, 
          \, \sst 
          \, \H_{r:} \bw \geq \rho - \xi_r, 
          (r = 1, \dots, | {\cal I }|);
          \bw \geq {\boldsymbol 0}, {\boldsymbol
          1}^\T \bw = 1; \; {\boldsymbol \xi} \geq {\boldsymbol 0}.
          \label{EQ:HingeP2}
      \end{align}

      The Lagrange dual problem of   
      the above linear programming problem \eqref{EQ:HingeP2} is easily derived:
        \begin{align}
            \label{EQ:HingeD2}
            \min_{ \pi, \bu } \; \pi
            \; \sst \,
            \tsum_{r = 1}^ { | \SS | } u_r  \H_{r:}  \leq \pi
            {\boldsymbol 1} ^\T; 
            {\boldsymbol 1}^\T \bu = 1,  
            {\boldsymbol 0} \leq \bu \leq v {\boldsymbol 1}. 
        \end{align}
        We can then use column generation to solve the original
        problem iteratively by looking at both the primal and dual
        problems. 
        See \citet{Shen2008PSD} for the algorithmic details. 
        In this work we are more interested in smooth loss functions 
        such as the exponential loss and logistic loss, as presented in
        the sequel.

\subsection{Learning with the Exponential Loss}
\label{sec:Exponential}

      By employing the exponential loss, we want to optimize
      \begin{align}
      \label{EQ:4}
      \min_{   \X, {\boldsymbol \rho}  } \, & \log
                    \bigl(
                           \tsum_{r=1}^{|\SS|}  \exp (- \rho_r )
                    \bigr) +  v
        \trace(\X)
        \notag
        \\
         \sst \, & \rho_r = \innerp{\A_r}{\X}, r =
         1,\cdots,|\SS|,
        \; \X \psd 0.
      \end{align}
   Note that:
         1) We 
         are proposing a
         logarithmic version of the sum of
         exponential loss. This transform does not change the original
         optimization problem of sum of exponential loss because the
         logarithmic function is strictly monotonically increasing.
      2)
         A regularization term $\trace(\X)$ has been applied. Without
         this regularization, one can always multiply 
         $\X$ by an arbitrarily large scale factor in order
         to make the exponential loss approach
         zero in the case of all constraints being satisfied.
         This trace-norm regularization may also lead to low-rank
         solutions.
     3)
         An auxiliary variable $ \rho_r, r = 1,\dots $ must be
         introduced for deriving a meaningful dual problem, as we show
         later.

      We now derive the Lagrange dual of the problem that we are interested in.
      The original problem \eqref{EQ:4} now becomes
      \begin{align}
            \label{EQ:5}
            \min_{ {\boldsymbol \rho}, \bw  } \, & \log
                    \bigl(
                           \tsum_{r=1}^{|\SS|}  \exp (- \rho_r )
                    \bigr) +  v
                    {\boldsymbol 1}^\T \bw
        \notag
        \\
        \sst \, & \rho_r = \H_{r:} \bw, r =
         1,\cdots,|\SS|;
        \, 
        \bw \geq \boldsymbol 0.
      \end{align}
      We have used the Equation \eqref{EQ:symb}. 
        In order to derive its dual, we write its Lagrangian
        \begin{align}
           L( \bw, \boldsymbol \rho, \bu  )
           &=
            \log
                    \bigl(
                           \tsum_{r=1}^{|\SS|}  \exp (- \rho_r)
                    \bigr) +  v
        {\boldsymbol 1}^\T \bw
        +
        \tsum_{r=1}^{ |\SS| } u_r ( \rho_r - \H_{r:} \bw ) -
        \bp^\T \bw,
        \end{align}
        with $ \bp \geq  0 $. The dual problem is obtained by finding
        the saddle point of $ L $; \ie,
        $ \sup_\bu \inf_{ \bw, \boldsymbol
        \rho} L $.
        \begin{align}
        \inf_{\bw, \boldsymbol \rho}
        L  &=
        \inf_{\boldsymbol \rho}
        \overbrace{
         \log
                    \bigl(
                           \tsum_{r=1}^{|\SS|}  \exp ( - \rho_r )
                    \bigr)
                     + \bu^\T {\boldsymbol \rho}
                     }^{L_1}
        +
        \inf_{\bw}
        \overbrace{
                (
                v{\boldsymbol 1}^\T - \tsum_{r=1}^{ |\SS| } u_r \H_{r:}
                -\bp^\T
                ) \bw
                }^{ L_2 }
                \label{EQ:LL2}
        \\
        &=
        - \tsum_{ r = 1}^{ |\SS|} u_r \log u_r.
        \end{align}
        The infimum of $ L_1 $ is found by setting its first derivative to
        zero and we have:
        \begin{equation}
        \inf_{\boldsymbol \rho} L_1
        =
        \begin{cases}
            - \tsum_r u_r \log u_r & \text{if $ \bu \geq  {\boldsymbol
            0}, {\boldsymbol 1}^\T \bu =1
         $,}
        \\
        - \infty                & \text{otherwise.}
        \end{cases}
        \end{equation}
        The infimum is Shannon entropy.
        $ L_2 $ is linear in $ \bw $, hence it must be $ \boldsymbol 0
        $. It leads to
        \begin{equation}
             \tsum_{r=1}^{ |\SS| } u_r \H_{r:}
             \leq v{\boldsymbol 1}^\T.
            \label{EQ:C1}
        \end{equation}
        The Lagrange dual problem of \eqref{EQ:5}
        is an entropy maximization problem, which writes
        \begin{align}
            \max_\bu \, &  - \tsum_{r=1}^{ |\SS| } u_r \log u_r, \,
            \sst \, \bu \geq {\boldsymbol 0},
            {\boldsymbol 1}^\T \bu =1,
            \text{and } \eqref{EQ:C1}.
            \label{EQ:D1}
        \end{align}
        Weak and strong duality hold under mild conditions
        \cite{Boyd2004Convex}. That
        means,
        one can usually solve one problem from the other.
        The KKT conditions link the optimal between these two
        problems. In our case, it is
        \begin{equation}
            u_r^\star =
                      \frac{ \exp ( - \rho_r^\star ) }
                      { \tsum_{k=1}^{ |\SS| } \exp ( - \rho_k^\star ) },
                      \forany r.
            \label{EQ:KKT}
        \end{equation}

        While it is possible to devise a totally-corrective column
        generation based optimization procedure for solving our
        problem as the case of LPBoost \cite{Demiriz2002LPBoost},
        we are more interested in considering {\em
        one-at-a-time} coordinate-wise descent algorithms,
        as the case of AdaBoost \cite{Schapire1999Boosting}.
        Let us start from some basic knowledge of column generation
        because our coordinate descent strategy is inspired by column
        generation.

        If we know all the bases $ \Z_j \, (j=1\dots J) $ and hence the
        entire matrix $ \H $ is known. Then either the primal
        \eqref{EQ:5} or the dual \eqref{EQ:D1}
        can be trivially solved (at least in theory) because both are
        convex optimization problems. We can solve them in polynomial
        time. Especially the primal problem is  convex minimization
        with simple non\-negative\-ness constraints. Off-the-shelf
        software like LBFGS-B \cite{Zhu1997LBFGS} can be used for this
        purpose.
        Unfortunately, in practice, we do not access
        all the bases: the possibility of $ \Z $ is
        infinite. In convex optimization, column generation is a
        technique that is designed for solving this difficulty.

        Column generation was originally advocated  for solving large
        scale linear programs \cite{Lubbecke2005Selected}.
        Column generation is based on the fact that for a linear
        program, the number of non-zero variables of the optimal
        solution is equal to the number of constraints. Therefore,
        although
        the number of possible variables may be large, we only need a
        small subset of these in the optimal solution.
        For a general convex problem, we can use column generation
        to obtain
        an {\em approximate} solution.
        It works by only considering a small
        subset of the entire variable set. Once it is solved, we ask
        the question:``Are there any other variables that can be
        included to improve the solution?''.  So we must be able to
        solve the subproblem: given a set of dual values, one either
        identifies a variable that has a favorable reduced cost, or
        indicates that such a variable does not exist.
        Essentially, column generation
        finds the variables with negative reduced costs without
        explicitly enumerating all variables.

        Instead of directly solving the primal problem \eqref{EQ:5},
        we find the most
        violated constraint in the dual \eqref{EQ:D1}
        iteratively for the current
        solution and adds this constraint to the optimization problem.
        For this purpose, we need to solve
        \begin{equation}
        \label{EQ:weak}
          {\hat  \Z}
          = \argmax\nolimits_\Z \left\{ \tsum_{r=1}^{ |\SS| }
                       u_r
                       \bigl< \A_r,  \Z
                       \bigr>,
                       \, \sst \,
                       \Z  \in  \Psi_1
                      \right\}.
         \end{equation}
        We discuss how to efficiently solve \eqref{EQ:weak} later.
        Now we move on to derive a coordinate descent optimization
        procedure.

        \subsection{Coordinate Descent Optimization}
        We show how an AdaBoost-like optimization
        procedure can be derived.

        \subsubsection{Optimizing for $ w_j $ }

        Since we are interested in the {\em one-at-a-time}
        coordinate-wise optimization, we keep $ w_1, $ $w_2,$ $\dots,$
        $ w_{j-1} $ fixed when solving for $ w_j $.
        The cost function of the primal problem is (in the following
        derivation, we drop
        those terms irrelevant to the variable $ w_j $)
        \[
                C_p ( w_j ) =  \log \bigl[ \tsum_{r=1}^{|\SS|}
                \exp ( -\rho_r^{j-1} ) \cdot
                \exp ( - \H_{rj} w_{j}  )
                \bigr] + v w_j.
        \]
        Clearly, $ C_p $ is convex in $ w_j $ and hence there is only
        one  minimum that is also globally optimal.
        The first derivative of $C_p$ \wrt $ w_j $ vanishes at
        optimality,
        which results in
        \begin{equation}
            \tsum_{r=1}^{ |\SS| }  ( \H_{rj} - v )u_r^{j-1}
                                  \exp( -w_j \H_{rj}  ) = 0.
            \label{EQ:w}
        \end{equation}

        If $ \H_{rj} $  is  discrete, such as $\{+1, -1 \}$ in standard
        AdaBoost, we can obtain a closed-form solution similar to
        AdaBoost. Unfortunately in our case, $ \H_{rj} $ can be any
        real value.
        We instead use bisection to search for the optimal $ w_j $.
        The bisection method is one of the root-finding algorithms.
        It repeatedly divides an interval in half and then selects the
        subinterval in which a root exists.  Bisection is a simple and
        robust, although it is not the fastest algorithm for
        root-finding.
        Algorithm \ref{ALG:bisection}
        gives the bisection procedure. We have utilized the fact that
        the l.h.s. of \eqref{EQ:w} must be positive at $ w_l $.
        Otherwise no solution can be found.  When $ w_j = 0 $,
        clearly the l.h.s. of \eqref{EQ:w} is positive.

\SetVline
\linesnumbered

\begin{algorithm}[t]
\caption{Bisection search for $ w_j $.}
\begin{algorithmic}
\normalsize{
   \KwIn{
    An interval $ [ w_l, w_u] $ known to contain the optimal value of
    $ w_j $ and convergence tolerance $ \varepsilon > 0$.
   }
\Repeat{$w_u - w_l < \varepsilon$}
{
$\Dot$ 
        $w_j = 0.5 ( w_l + w_u )$\;
$\Dot$
\If { $\text{{\rm l.h.s.} of} \,\, \eqref{EQ:w}  > 0 $}
        { $ w_l = w_j $;  }
\Else
        { $ w_u = w_j $.  }
}
\KwOut{
        $w_j$.
}
}
\end{algorithmic}
\label{ALG:bisection}
\end{algorithm}

        \subsubsection{Updating $ \bu $ }
        The rule for updating $ \bu $ can be easily obtained from
        \eqref{EQ:KKT}.
        At iteration $ j $, we have
        \begin{align*}
            u_r^j \propto \exp ( - \rho_r^j  )
            \propto u_r^{j-1} \exp  (- \H_{rj} w_j ) ,
            \text{ and }
             \tsum_{r=1}^{ |\SS| } u_r^j = 1,
        \end{align*}
        derived from \eqref{EQ:KKT}.
        So once $ w_j $ is calculated,
        we can update $ \bu $ as
        \begin{equation}
            \label{EQ:UpdateRule}
            u_r^j = \frac{ u_r^{j-1} \exp  (- \H_{rj} w_j )  }{ z },
            r = 1,\dots, |\SS|,
        \end{equation}
        where $ z $ is a normalization factor so that $ \tsum_{r=1}^{
        |\SS| } u_r^j = 1 $.
        This is exactly the same as AdaBoost.

\subsection{The Base Learning Algorithm}

   In this section, we show that
   the optimization problem \eqref{EQ:weak} can be exactly and
   efficiently solved using eigenvalue-decomposition (EVD). 
   
   From $ \Z \psd 0 $ and $ \rank(\Z) = 1$, we know that $ \Z $ has
   the format: $ \Z = \bv \bv^\T$,  $ \bv \in \Real^D$; and $ \trace(\Z)
   = 1$ means $ \fnorm{2}{\bv} = 1$. 
   We have $$   
            \bigl<     
                          {\tsum_{r=1}^{| \SS |}
                            {  u_r \A_r  }
                           },
                           \Z
                       \bigr>
                =  \bv \bigl(  {\tsum_{r=1}^{| \SS |}
                            {  u_r \A_r  } }   
                        \bigr) \bv^\T.
         $$ 
   By denoting  
   \begin{equation}
   \label{EQ:MM}
   {\hat \A} =  \tsum_{r=1}^{| \SS |}
                            {  u_r \A_r  },
   \end{equation}
   the base learning optimization equals:
   \begin{align}
       \max_\bv \,\, \bv^\T {\hat \A} \bv, \,\, \sst \fnorm{2}{\bv} = 1.
       \label{EQ:MM2}
   \end{align}
   It is clear that the largest eigenvalue of $ \hat \A $, 
   $ \eigenmax (\hat \A) $, and its corresponding eigenvector $ \bv_1 $
   gives the solution to the above problem.
   Note that $ \hat \A $ is symmetric.
        
   $ \eigenmax (\hat \A) $
   is also used as one of the stopping criteria of the algorithm.
   Form the condition \eqref{EQ:C1},  
   $ \eigenmax (\hat \A) < v $ means that
        we are not able to find a new base matrix 
   $\hat \Z $ that violates
   \eqref{EQ:C1}---the algorithm converges.

        Eigenvalue decompositions is one of the main computational
        costs in
        our algorithm.  There are approximate eigenvalue solvers,
        which guarantee that for a symmetric matrix $ \U $ and any $
        \varepsilon > 0 $, a vector $ \bv $ is found such that  $ \bv
        ^\T\U \bv \geq \eigenmax - \varepsilon$.  To approximately
        find the largest eigenvalue and eigenvector can be very
        efficient using Lanczos or power method.  We can use the
        MATLAB function {\sf eigs} to calculate the largest
        eigenvector, which calls mex files of ARPACK.  ARPACK is a
        collection of Fortran subroutines designed to solve large
        scale eigenvalue problems.  When the input matrix is
        symmetric,  this software uses a variant of the Lanczos
        process called the implicitly restarted Lanczos method.

        Another way to reduce the time for computing the leading
        eigenvector is to compute an approximate EVD by a fast Monte
        Carlo algorithm such as the linear time SVD
        algorithm developed in \cite{Drineas2004Fast}.

\SetVline
\linesnumbered

\begin{algorithm}[t]
\caption{Positive semidefinite matrix learning with stage-wise boosting.}
\begin{minipage}[t!]{0.98\linewidth}
\begin{algorithmic}
\normalsize{
   \KwIn{
    \begin{itemize}
       \item
           Training set triplets $  ( \ba_i, \ba_j, \ba_k ) \in \SS $;
           Compute $ \A_r, r = 1,2,\cdots,$ using \eqref{EQ:Ar}. 
        \item
            $ J $: maximum number of iterations;
        \item
            (optional) regularization parameter $ v $; We may
            si\-m\-p\-ly set
            $ v $ to a v\-e\-r\-y small value, \eg, $ 10^{-7}$.  
    \end{itemize}
   }
   { {\bf Initialize}:
        $ u^0_r = \tfrac{1}{|\SS|}, r = 1\cdots |\SS|  $\;
    }%
\For{ $ j = 1,2,\cdots, J $ }
{
$\Dot$ 
        Find a new base $ \Z_j $ by finding the largest
        eigenvalue ($\eigenmax (\hat \A)$) and its eigenvector of $ \hat \A $ in 
        \eqref{EQ:MM}\;

$\Dot$
        \If{
        $\eigenmax (\hat \A) < v $}
        {break (converged)\;}
$\Dot$
Compute $ w_j $ using Algorithm \ref{ALG:bisection}\;
$\Dot$
Update $ \bu$ to obtain $ u_r^j, r = 1,\cdots |\SS| $ using
\eqref{EQ:UpdateRule}\;
}
\KwOut{
The final \PSD matrix $ \X \in \Real^{D \times D}$, $ \X =
\sum_{j=1}^J w_j  \Z_j $.  
}
}
\end{algorithmic}
\end{minipage}
\label{ALG:MetricBoost}
\end{algorithm}

        We summarize our main algorithmic results in Algorithm~\ref{ALG:MetricBoost}.

\subsection{Learning with the Logistic Loss}

    We have considered the exponential loss in the last content. The
    proposed framework is so  general that
    it can also accommodate other  convex loss  functions. Here we consider the logistic loss, which
    penalizes mis-classi\-fications with more moderate penalties than the exponential loss. It is
    believed on noisy data, the logistic loss may achieve better classification performance.

    With  the same settings as in the case of the exponential loss,
    we can write our optimization problem as
   \begin{align}
      \label{EQ:4B}
      \min_{ {\boldsymbol \rho}, \bw  }\, &
                \tsum_{r=1}^{|\SS|}  \logit ( \rho_r )
                     +  v
                     {\bf 1}^\T \bw
        \notag
        \\
        \sst \, & \rho_r = \H_{r:} \bw, r =
         1,\cdots,|\SS|,
        \bw \geq 0.
      \end{align}
      Here $ \logit ( \cdot )$ is the logistic loss defined as
      $ \logit( z ) = \log(1 + \exp(-z) )$.
        Similarly, we derive its Lagrange dual as
 \begin{align}
      \label{EQ:4C}
      \min_{ \bu } \, &
                \tsum_{r=1}^{|\SS|}  \logit^* ( - u_r )
        \notag
        \\
         \sst \, &
         \tsum_{r=1}^{|\SS|} u_r \H_{r:} \leq  v {\bf 1}^\T,
      \end{align}
  where $ \logit^* (\cdot)$ is the Fenchel conjugate function of
  $ \logit ( \cdot ) $, defined as
  \begin{equation}
      \label{EQ:fenchel}
      \logit^*(- u) = u \log(u) + ( 1 - u ) \log(1 - u),
  \end{equation}
  when $ 0 \leq u \leq 1$, and $ \infty $ otherwise.
  So the Fenchel conjugate of $ \logit(\cdot)$ is the binary
  entropy function. We have reversed the sign of $ \bu $ when deriving the
  dual.

    Again, according to the KKT conditions,
    we have
    \begin{equation}
        \label{EQ:KKT3}
        u_r^\star = \frac{ \exp ( - \rho_r^\star )  }{ 1 + \exp  (- \rho_r^\star ) },
        \;\; \forany r,
    \end{equation}
    at optimality.
    From \eqref{EQ:KKT3} we can also see that $ u $ must be in $ (0, 1)$.

    Similarly, we want to optimize the primal cost function in a coordinate descent way.
    First, let us find the relationship between $ u_r ^j $ and
    $ u_r ^{j-1} $. Here $ j $ is the iteration index.
    From \eqref{EQ:KKT3}, it is trivial to obtain
    \begin{equation}
        \label{EQ:UpdateU1}
        u^j_r = \frac{1}{ (1/u_r^{j-1} -1 ) \exp ( \H_{rj} w_j ) + 1},
        \;\; \forany r.
    \end{equation}
    The optimization of $ w_j $ can be solved by looking for the root of
    \begin{equation}
        \label{EQ:w2}
        \tsum_{r=1}^{|\SS|}  \H_{rj} u_r^j - v = 0,
    \end{equation}
    where $ u_r^j $ is a function of $ w_j $ as defined in \eqref{EQ:UpdateU1}.

    Therefore, in the case of the logistic loss, to find $ w_j $, we modify the bisection
    search of Algorithm \ref{ALG:bisection}:
    \begin{itemize}
        \item Line 3: {\bf if} l.h.s. $of \; \eqref{EQ:w2} > 0$ {\bf then} \dots
    \end{itemize}
    and Line 7 of Algorithm \ref{ALG:MetricBoost}:
    \begin{itemize}
        \item Line 7: Update $ \bu$ using \eqref{EQ:UpdateU1}.
    \end{itemize}

\SetVline
\linesnumbered

\begin{algorithm}[t]
\caption{Positive semidefinite matrix learning with totally corrective boosting.}
\begin{minipage}[t!]{0.98\linewidth}
\begin{algorithmic}
\normalsize{
   \KwIn{
    \begin{itemize}
       \item
           Training set triplets $  ( \ba_i, \ba_j, \ba_k ) \in \SS $;
           Compute $ \A_r, r = 1,2,\cdots,$ using \eqref{EQ:Ar}. 
        \item
            $ J $: maximum number of iterations;
        \item
            Regularization parameter $ v $.  
    \end{itemize}
   }
    { {\bf Initialize}:
        $ u^0_r = \tfrac{1}{|\SS|}, r = 1\cdots |\SS|  $\;
    }%
\For{ $ j = 1,2,\cdots, J $ }
{
$\Dot$ 
        Find a new base $ \Z_j $ by finding the largest
        eigenvalue ($\eigenmax (\hat \A)$) and its eigenvector of $ \hat \A $ in 
        \eqref{EQ:MM}\;

$\Dot$
        \If{
        $\eigenmax (\hat \A) < v $}
        {break (converged)\;}
$\Dot$
    Optimize for $ w_1, w_2, \cdots, w_j $ by solving
    the primal problem \eqref{EQ:5} when the exponential loss is used
    or \eqref{EQ:4B} when the logistic loss is used\;
$\Dot$
    Update $ \bu$ to obtain $ u_r^j, r = 1,\cdots |\SS| $ using
    \eqref{EQ:KKT} (exponential loss) or \eqref{EQ:KKT3} (logistic loss)\;
}
\KwOut{
The final \PSD matrix $ \X \in \Real^{D \times D}$, $ \X =
\sum_{j=1}^J w_j  \Z_j $.  
}
}
\end{algorithmic}
\end{minipage}
\label{ALG:TC}
\end{algorithm}

\subsection{Totally Corrective Optimization}

        In this section, we derive a totally-corrective version of
        \BoostMetric, similar to the case of TotalBoost 
        \cite{Warmuth2006Total,Dual2010Shen} 
        for classification, in the sense that the coefficients of 
        all weak learners are updated at each iteration.

        Unlike the stage-wise optimization, here we do not need to keep previous
        weights of weak learners $ w_1, w_2, \dots, w_{j-1} $. Instead, the
        weights of all the selected weak learners $ w_1, w_2, \dots, w_{j} $
        are updated at each iteration $j$.
        As discussed, our learning procedure is able to employ various loss
        functions such as the hinge loss, exponential loss or logistic loss. To
        devise a totally-corrective optimization procedure for solving our
        problem efficiently, we need to ensure the object function to be
        differentiable with respect to the variables $w_1, w_2, \dots, w_j$.
        Here, we use the exponential loss function and the logistic loss
        function. It is possible to use sub-gradient descent methods when
        a non-smooth loss function like the hinge loss is used.

        It is clear that solving for $\boldsymbol w$  is a typical
        convex optimization problem since it has a differentiable and convex function
        \eqref{EQ:5} when the exponential loss is used, or \eqref{EQ:4B} when
        the logistic loss is used. Hence it can be solved using
        off-the-shelf gradient-descent solvers like L-BFGS-B \cite{Zhu1997LBFGS}.

        Since all the weights $w_1, w_2, \dots, w_j$ are updated,
        $u_r^j$ on $r=1 \dots |\SS|$ need  not to be updated but
        re-calculated at each iteration $j$. To calculate $u_r^j$, we use
        \eqref{EQ:KKT} (exponential loss) or \eqref{EQ:KKT3} (logistic loss)
        instead of \eqref{EQ:UpdateRule} or \eqref{EQ:UpdateU1}
        respectively.
        Totally-corrective \BoostMetric methods are very simple to implement.
        Algorithm \ref{ALG:TC} gives the summary of this algorithm.
Next, we show the convergence property of Algorithm \ref{ALG:TC}.  
Formally, we want to show the following theorem.
\begin{theorem}
                    Algorithm \ref{ALG:TC}  makes progress at each
                    iteration. In other words, the objective value
                    is decreased at each iteration. 
                    Therefore, in the limit, Algorithm \ref{ALG:TC}
                    solves the optimization problem \eqref{EQ:5} 
                    (or  \eqref{EQ:4B}) 
                    globally to a
                    desired accuracy.  
\end{theorem}
\begin{proof}
    Let us consider the exponential loss case of problem \eqref{EQ:5}. 
    The proof follows the same discussion for the logistic loss, or
    any other smooth convex loss function. Assume that the current
    solution is a finite subset of base learners (rank-one trace-one
    matrices) and their corresponding linear coefficients $ \bw $.
    If we add  a base matrix $ \hat \Z $ that is not in the current
    subset,  and the corresponding $ \hat w = 0 $,
    then the objective value and the solution must remain unchanged. 
    We can conclude that the current learned base learners and $
    \bw $ are the optimal solution already.

    Consider the case that this optimality condition is violated. We
    need to show that  we can find a base learner $ \hat \Z $, which is
    not in the current set of all the selected base learners,
    such that $ {\hat w} > 0 $ holds. 
    Now assume that $ \hat \Z $ is the base learner found by solving 
    \eqref{EQ:MM2}, and the convergence condition $  \lambda_{ \rm max
    } ( {\hat \A} )  \leq v  $ is not satisfied. So,  
    we have $  \lambda_{ \rm max } ( {\hat \A} ) = \innerp{
    \tsum_{r=1}^{ |\SS| } u_r \A_r} { \hat \Z }  > v  $. 

        If, after this weak learner $ \hat \Z $ is added into the
        primal problem, the primal solution remains unchanged, 
        i.e., the corresponding $ {\hat w} = 0$,
        then 
        from the optimality condition that $ L_2  $ in \eqref{EQ:LL2} must
        be zero, we know that  
        $  {\hat p}  =  v -    \innerp{
    \tsum_{r=1}^{ |\SS| } u_r \A_r} { \hat \Z } < 0 $.
    This contradicts the fact the Lagrange multiplier $ {\hat p} \geq
    0 $.

    We can conclude that after the base learner $ \hat \Z $ is added
    into the primal problem, its corresponding $ \hat w $ must admit a
    positive value. It means that one more free variable is added into
    the problem and re-solving the primal problem would reduce the
    objective value. Hence a strict decrease in the objective is
    guaranteed. So Algorithm \ref{ALG:TC} makes progress at each
    iteration.

    Furthermore, as the optimization problems involved are all convex,
    there are no local optimal solutions. Therefore Algorithm
    \ref{ALG:TC} is guaranteed to converge to the global solution. 

      Note that the above proof establishes the convergence of Algorithm
    \ref{ALG:TC} but it remains unclear about the convergence rate. 
\end{proof}

        \subsection{Multi-pass \BoostMetric}
 
       In this section, we show that \BoostMetric can use
       multi-pass learning to enhance the performance.

        Our \BoostMetric uses training set triplets $  ( \ba_i, \ba_j, \ba_k
        ) \in \SS $ as input for training. The Mahalanobis distance metric $\X$
        can be viewed as a linear transformation in the Euclidean space by
        projecting the data using matrix $\L$ $(\X = \L\L^\T)$. That is, nearest neighbors
        of samples using Mahalanobis distance metric $\X$ are the same as nearest
        neighbors using Euclidean distance in the transformed space. 
        \BoostMetric assumes that the triplets of input training set
        approximately represent the actual nearest neighbors of samples in the transformed
        space defined by the Mahalanobis metric. However, even though the
        triplets of \BoostMetric consist of nearest neighbors of the original training
        samples, generated triplets are not exactly the same as the actual
        nearest neighbors of training samples in the transformed space by $\L$.

        We can refine the results of \BoostMetric iteratively, as in the
        multiple-pass LMNN \cite{Weinberger2009Distance}:
        \BoostMetric can estimate the triplets in
        the transformed space under a multiple-pass procedure as close to actual
        triplets as possible. The rule for multi-pass \BoostMetric is
        simple. At each pass $p$ ($p=1,2,\cdots)$,
        we decompose the learned Mahalanobis distance
        metric $\X_{p-1}$ of previous pass into transformation matrix $\L_p$.
        The initial matrix $\L_1$ is an identity matrix. 
        Then we generate the
        training set triplets from the set of points $\{\L^\T \ba_1, \dots,
        \L^\T \ba_m\}$ where $\L = \L_1 \cdot \L_2 \cdot \cdots \L_p$. The
        final Mahalanobis distance metric $\X$ becomes $\L\L^\T$ in
        Multi-pass \BoostMetric.
\begin{sidewaystable}%
\begin{center}
\caption{Comparison of test classification error rates (\%) of a
$3$-nearest neighbor classifier on benchmark datasets. Results of
NCA are not available either because the algorithm does not converge
or due to the out-of-memory problem. BoostMetric-E indicates
\BoostMetric with the exponential loss and BoostMetric-L is \BoostMetric
with the logistic loss; both use stage-wise optimization. 
``MP'' means Multiple-Pass \BoostMetric and ``TC'' is
\BoostMetric with totally corrective optimization. We report
computational time as well.
 }\label{Table:ExperimentalResults}
\centering \small
\begin{tabular}{l|l||c|c|c|c|c|c|c}
\hline
\multicolumn{2}{l||}{}                 & MNIST & USPS & Letters & yFaces &  bal & wine & iris\\
\hline\hline

\multicolumn{2}{l||}{ \# of {samples} }   & 70,000 & 11,000 & 20,000 &
2,414 & 625  & 178 & 150\\\hline

\multicolumn{2}{l||}{ \# of {triplets}}  & 450,000 & 69,300 & 94,500 &
15,210 & 3,942& 1,125& 945\\\hline

\multicolumn{2}{l||}{ dimension  }  & 784    & 256    & 16     &
1,024 & 4 & 13 & 4
\\\hline

\multicolumn{2}{l||}{ dimension after PCA  } & 164    & 60     &        &
300   &  & &
\\\hline

\multicolumn{2}{l||}{ \# of samples for training } & 50,000 & 7,700  & 10,500 & 1,690 &
438  & 125 & 105\\\hline

\multicolumn{2}{l||}{\# cross validation samples}  & 10,000  & 1,650  & 4,500  & 362 &
94   & 27 & 23\\\hline

\multicolumn{2}{l||}{\# test samples} & 10,000  & 1,650  & 5,000  & 362 &
93   & 26 & 22\\\hline

\multicolumn{2}{l||}{\# of classes} & 10     & 10     & 26     & 38
&  3    & 3 & 3\\\hline

\multicolumn{2}{l||}{\# of runs}  & 1      & 10      & 1     & 10    & 10    &  10  & 10 \\

\hline \hline

\multirow{15}{*}{\begin{sideways}\parbox{30mm}{\centering
\textbf{Error Rates}}\end{sideways}}
 & Euclidean & 3.19 &  4.78 (0.40) &  5.42 &  28.07 (2.07) &  18.60 (3.96) & 28.08 (7.49) & 3.64 (4.18)\\
\cline{2-9}

& PCA & 3.10 &  3.49 (0.62) &  &  28.65 (2.18) &  & & \\ \cline{2-9}

& LDA & 8.76 & 6.96 (0.68) & 4.44 &  \textbf{5.08 (1.15)} &  12.58 (2.38) & 0.77 (1.62) & 3.18 (3.07)\\
\cline{2-9}

& RCA & 7.85 & 5.35 (0.52) & 4.64 &  7.65 (1.08) &  17.42 (3.58) & \textbf{0.38 (1.22)} & 3.18 (3.07)\\
\cline{2-9}

& NCA &  &  &  &   &  18.28 (3.58) & 28.08 (7.49) & 3.18 (3.74)\\
\cline{2-9}

& LMNN & 2.30 &  3.49 (0.62) &  3.82 &  14.75 (12.11) & 12.04 (5.59)
& 3.46 (3.82) & 3.64 (2.87)\\ \cline{2-9}

& ITML &   2.80 &  3.85 (1.13) &  7.20 &  19.39 (2.11) & 10.11
(4.06) & 28.46 (8.35) & 3.64 (3.59)
\\ \cline{2-9}

& BoostMetric-E & 2.65 & 2.53 (0.47) &
3.06 &  6.91 (1.90) &  10.11 (3.45) & 3.08 (3.53) & 3.18 (3.74) \\
\cline{2-9}

& BoostMetric-E, MP & 2.62    & 2.24 (0.40) & 2.80 & 6.77 (1.77) &
10.22 (4.43) & 1.92 (2.03) & 3.18 (4.31)\\\cline{2-9}

& BoostMetric-E, TC & 2.20 & 2.25 (0.51) & 2.82 & 7.13 (1.40) &
10.22 (2.39) & 4.23 (3.82) & 3.18 (3.07)
\\\cline{2-9}

& BoostMetric-E, MP, TC & 2.34 &  2.23 (0.34) & 3.74 &
7.29 (1.58)  & 10.32 (3.09)  & 2.69 (3.17)  & 3.18 (4.31)\\
\cline{2-9}

& BoostMetric-L & 2.66 &  2.38 (0.31) &
2.80 & 6.93 (1.59) & 9.89 (3.12) & 3.08 (3.03) & 3.18 (3.74) \\
\cline{2-9}

& BoostMetric-L, MP & 2.72 & 2.22 (0.31) & 2.70 & 6.66 (1.35) &
10.22 (4.25)    & 1.15  (1.86)    & 3.18 (4.31)\\ \cline{2-9}

& BoostMetric-L, TC & \textbf{2.10} &  \textbf{2.13 (0.41)} &
2.48 & 7.71 (1.68) & 9.57 (3.18) & 3.85 (4.05) & 3.64 (2.87) \\
\cline{2-9}

& BoostMetric-L, MP, TC & 2.11     & 2.10 (0.42)  &
\textbf{2.36}  & 7.15 (1.32)  & \textbf{8.49 (3.71)}   & 3.08 (3.03)  & \textbf{2.73 (2.35)}\\
\hline \hline

\multirow{4}{*}{\begin{sideways}\parbox{10mm}{\textbf{Comp.
Time}}\end{sideways}}
& LMNN & 10.98h &  20s &  1249s &  896s &  5s & 2s & 2s\\
\cline{2-9}

& ITML &  0.41h & 72s  &  55s &  5970s &  8s & 4s & 4s\\
\cline{2-9}

& BoostMetric-E & 2.83h & 144s & 3s &  628s &  less than 1s & 2s & less than 1s\\
\cline{2-9}

& BoostMetric-L & 0.89h &  65s &  34s &  256s &  less than 1s & 2s & less than 1s\\
\hline

\end{tabular}
\end{center}
\end{sidewaystable}

\section{Experiments}
\label{sec:exp}

        In this section, we present experiments on data visualization,
        classification and image retrieval tasks.

\subsection{An Illustrative Example}

\begin{figure*}[t]
    \centering
    \fbox{
    \includegraphics[width=0.25\textwidth,height=0.22\textwidth]
                    {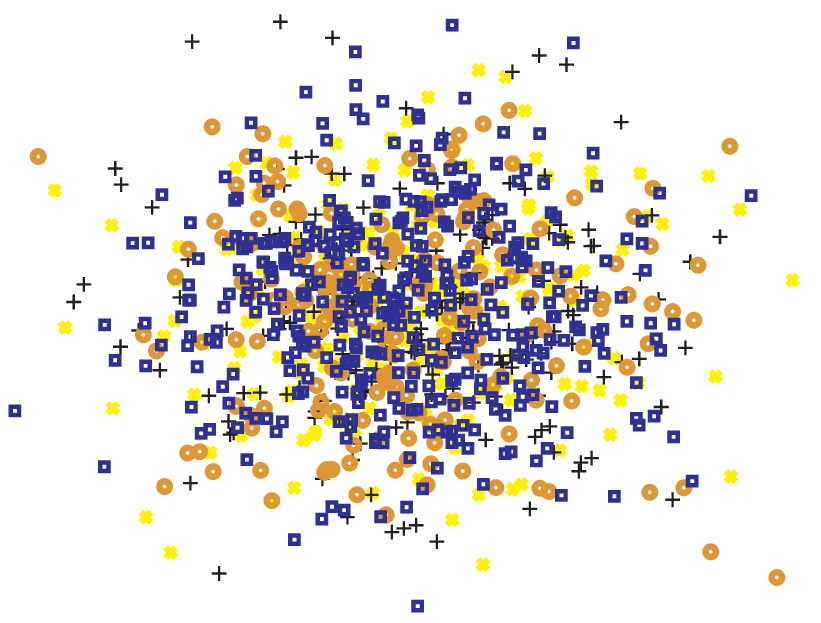}
    }
    \fbox{
    \includegraphics[width=0.25\textwidth,height=0.22\textwidth]
                     {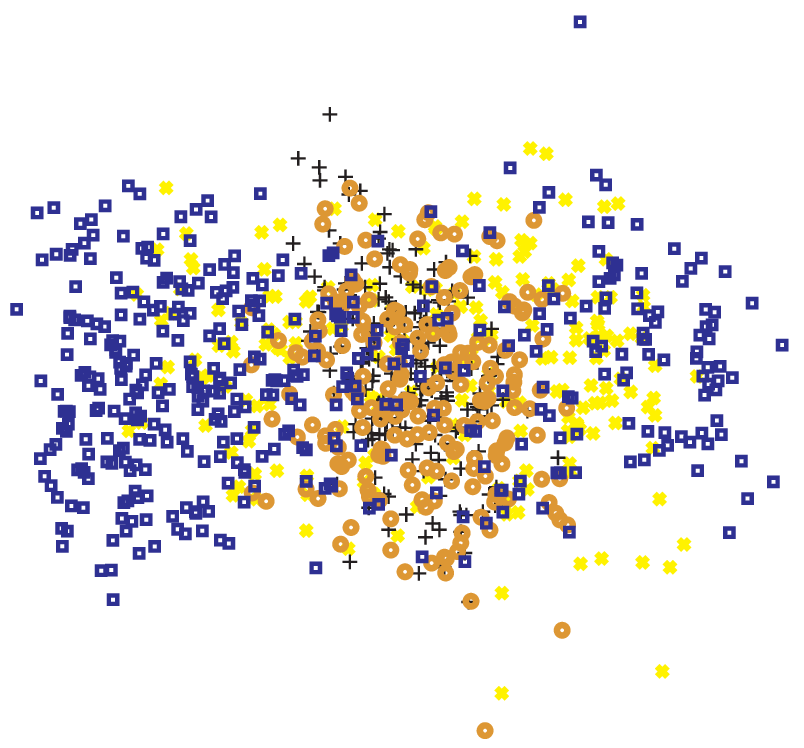}
    }
    \fbox{
    \includegraphics[width=0.25\textwidth,height=0.22\textwidth]
                      {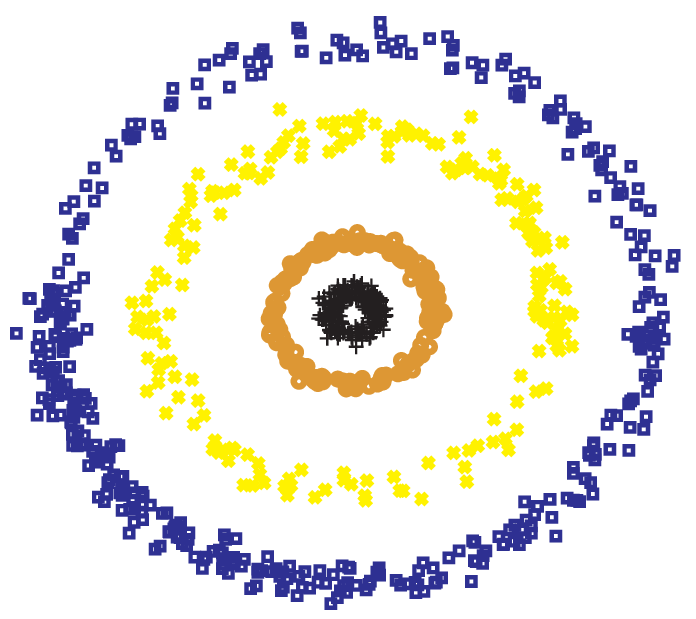}
    }
    \caption{The data are projected into 2D with
    PCA (left), LDA (middle) and \BoostMetric (right).
    Both PCA and LDA fail to recover the data structure.
    The local structure of the data is
    preserved after projection by \BoostMetric.}
    \label{fig:circle}
\end{figure*}

        We demonstrate a data visualization problem on an artificial
        toy dataset (concentric circles) in Fig.~\ref{fig:circle}.
        The dataset has four
        classes. The first two dimensions follow concentric circles
        while the left eight dimensions are all random Gaussian noise.
        In this experiment, $ 9000$ triplets are generated for
        training.
        When the scale of the noise is large, PCA fails find the
        first two informative dimensions.
        LDA fails too because clearly each class does not follow a
        Gaussian distraction and their centers overlap at the same
        point. The proposed \BoostMetric algorithm find the informative
        features. The eigenvalues of $ \X$ learned by \BoostMetric
        are $\{ 0.542, 0.414, 0.007, 0, \cdots, 0  \} $, which
        indicates that \BoostMetric successfully reveals the data's
        underlying 2D structure.
        We have used the exponential loss in this experiment.

\subsection{Classification on Benchmark Datasets}

        We evaluate \BoostMetric on $ 7 $ datasets of different sizes.
        Some of the datasets have very high dimensional inputs. We use PCA
        to decrease the dimensionality before training on these datasets
        (MNIST, USPS and yFaces). PCA pre-processing helps to eliminate noises and
        speed up computation. Table~\ref{Table:ExperimentalResults} summarizes the datasets
        in detail.
        We have used USPS and MNIST handwritten digits,
        Yale face recognition datasets,
        and a few UCI machine learning
        datasets\footnote{\url{http://archive.ics.uci.edu/ml/}}
        .

        Experimental results are obtained by averaging over $ 10 $ runs (except
        for large datasets MNIST and Letter).
        We randomly split the datasets for each run.
        We have used the same mechanism to generate training triplets as described in
        \cite{Weinberger05Distance}. Briefly, for each training point $ \ba_i $,
        $ k $ nearest neighbors that have same labels as $ y_i $ (targets), as well as
        $ k $ nearest neighbors that have different labels from $ y_i $ (imposers)
        are found.
        We then construct triplets from $ \ba_i $ and  its corresponding targets and imposers.
        For all the datasets, we have set $ k = 3 $ ($3$-nearest-neighbor).
        We have compared our method against a few methods:
        RCA \cite{Bar2005Mahalanobis}, NCA \cite{Goldberger2004Neighbourhood}, ITML~\cite{Davis2007Info} and LMNN
        \cite{Weinberger05Distance}. Also in Table~\ref{Table:ExperimentalResults},
        ``Euclidean'' is the baseline algorithm that uses the standard Euclidean distance.
        The codes for these compared algorithms are downloaded from the corresponding author's
        website.
        Experiment setting for LMNN follows
        \cite{Weinberger05Distance}. The slack variable parameter
        for ITML is tuned using cross validation over the values
        ${0.01, 0.1,
        1, 10}$ as in ~\cite{Davis2007Info}.
        For \BoostMetric, we have set $ v = 10^{-7} $, 
        the maximum number of  iterations $ J = 500 $.

        \BoostMetric has different variants which use
        1) the exponential loss (\BoostMetric-E), 
        2) the logistic loss (\BoostMetric-L), 
        3) multiple pass evaluation (MP) for updating triplets
        with the exponential and logistic loss, and 
        4) two optimization strategies, namely, stage-wise optimization and totally corrective
        optimization.  The experiments are conducted by using Matlab
        and a C-mex implementation of the L-BFGS-B algorithm.

        As reported in Table \ref{Table:ExperimentalResults}, we
        can conclude: 1) \BoostMetric consistently improves the
        accuracy of $ k $NN classification using Euclidean distance on
        most datasets. So learning a Mahalanobis metric based upon the
        large margin concept indeed leads to improvements in $ k$NN
        classification.  2) \BoostMetric outperforms other
        state-of-the-art algorithms in most cases (on $5$ out of $7$
        datasets).  LMNN is the second best algorithm on these $ 7 $
        data sets statistically.  LMNN's results are consistent with
        those given in \cite{Weinberger05Distance}.  ITML is faster
        than \BoostMetric on most large datasets such as MNIST.
        However it has higher error rates than \BoostMetric in our
        experiment.  3) NCA can only be run on a few small data sets.
        In general NCA does not perform well. Initialization is
        important for NCA because NCA's objective function is highly
        non-convex and can only find a local optimum.

        In this experiment, LMNN solves for the global optimum
        (learning $ \X$) except for the Wine dataset. When the LMNN
        solver solves for $ \X $ on the Wine dataset, the error rate
        is large ($20.77\% \pm 14.18\%$). So instead we have solved
        for the projection matrix $ \L $ on Wine. Also note that the
        number of training data on Iris, Wine and Bal in
        \cite{Weinberger05Distance} are different from our experiment.
        We have used these datasets from UCI.
        For the experiment on MNIST, if we deskew the handwritten
        digits data first as in \cite{Weinberger2009Distance},
        the final accuracy can be slightly
        improved. Here we have not deskewed the data.

\begin{table*}[t!]
\centering
{
\begin{tabular}{l|c|c|c|c|c}
\hline
    $ v $    & $  10^{-8} $ & $ 10^{-7} $  & $  10^{-6} $  & $  10^{-5} $   & $ 10^{-4} $ 
    \\ \hline\hline
    Bal        & 8.98 (2.59)  &  8.88 (2.52) & 8.88 (2.52)  & 8.88 (2.52) & 8.93 (2.52)  \\
    B-Cancer   & 2.11 (0.69)  &  2.11 (0.69) & 2.11 (0.69)  & 2.11 (0.69)  & 2.11 (0.69)  \\ 
    Diabetes   & 26.0 (1.33)  &  26.0 (1.33) & 26.0 (1.33)  & 26.0 (1.34) & 26.0 (1.46)  \\
\hline
\end{tabular}
}
\caption{Test error (\%) of a $ 3 $-nearest neighbor classifier 
with different 
values of the parameter $ v $.
Each experiment is run $10$ times.
We report the mean and variance. As expected,  as long as $ v $ is sufficiently small, 
in a wide range it almost does not
affect the final classification performance.  
}
\label{Table:v}
\end{table*}

\subsubsection{Influence of $ v $}

        Previously, we claim that the stage-wise version of
        \BoostMetric
        is parameter-free like AdaBoost.
        However, we do have a parameter $ v $. Actually,
        AdaBoost simply set $ v = 0 $.
        The coordinate-wise gradient descent optimization strategy of AdaBoost
        leads to an $ \ell_1$-norm  regularized maximum margin classifier
        \cite{Rosset2004Boosting}. It is shown that
        AdaBoost minimizes its loss criterion
        with an $ \ell_1 $ constraint on the coefficient vector.
        Given the similarity of the optimization of \BoostMetric with AdaBoost,
        we conjecture that \BoostMetric has the same property. Here we empirically
        prove that {\em as long as $ v $ is sufficiently small, the final performance
        is not affected by the value of $ v $}.
        We have set $ v $ from $ 10^{-8} $ to $ 10^{-4} $ and run \BoostMetric
        on $ 3 $ UCI datasets.
        Table~\ref{Table:v} reports the final $ 3$NN  classification error
        with different $ v $.
        The results are nearly identical.
        
        For the totally corrective version of \BoostMetric, similar results are 
        observed. Actually for LMNN, it was also reported that the regularization
        parameter does not have a significant impact on the final results
        in a wide range \cite{Weinberger2009Distance}.

\subsubsection{Computational time}

        As we discussed, one major issue
        in learning a Mahalanobis distance is heavy computational cost
        because of the semidefiniteness constraint.

        We have shown the running time of the proposed algorithm in Table~\ref{Table:ExperimentalResults}
        for the classification tasks\footnote{We
        have run all the experiments on a desktop with
        an Intel Core$^{\rm TM}$2 Duo CPU, $ 4 $G
        RAM and Matlab 7.7 (64-bit version).}.
        Our algorithm is generally fast. Our algorithm involves matrix operations and
        an EVD for finding its largest eigenvalue and its corresponding eigenvector.
        The time complexity of this EVD is $ O(D^2) $ with $ D $ the input dimensions.
         We compare our algorithm's running time with LMNN in Fig.~\ref{fig:cputime} on the
         artificial dataset (concentric circles). 
         Our algorithm is stage-wise \BoostMetric with the exponential loss. 
         We vary the input dimensions from
         $ 50 $ to $ 1000 $ and keep the number of triplets fixed to $ 250 $.
        LMNN does not use standard interior-point
        SDP solvers, which do not scale well. Instead LMNN heuristically combines sub-gradient
        descent in both the matrices $ \L$  and $ \X $.
        At each iteration, $ \X $ is projected back onto the \PSD cone using EVD. So a full EVD
        with time complexity $ O(D^3)$ is needed. Note that LMNN is much faster than SDP solvers
        like CSDP \cite{Borchers1999CSDP}. As seen from Fig.~\ref{fig:cputime}, when the input
        dimensions are low, \BoostMetric is comparable to LMNN. As expected, when the input
        dimensions become large, \BoostMetric is significantly faster than LMNN.  Note that our
        implementation is in Matlab. Improvements are expected if implemented in C/C++.

\begin{figure}[t!]
    \centering
    \includegraphics[width=0.5\textwidth]
                    {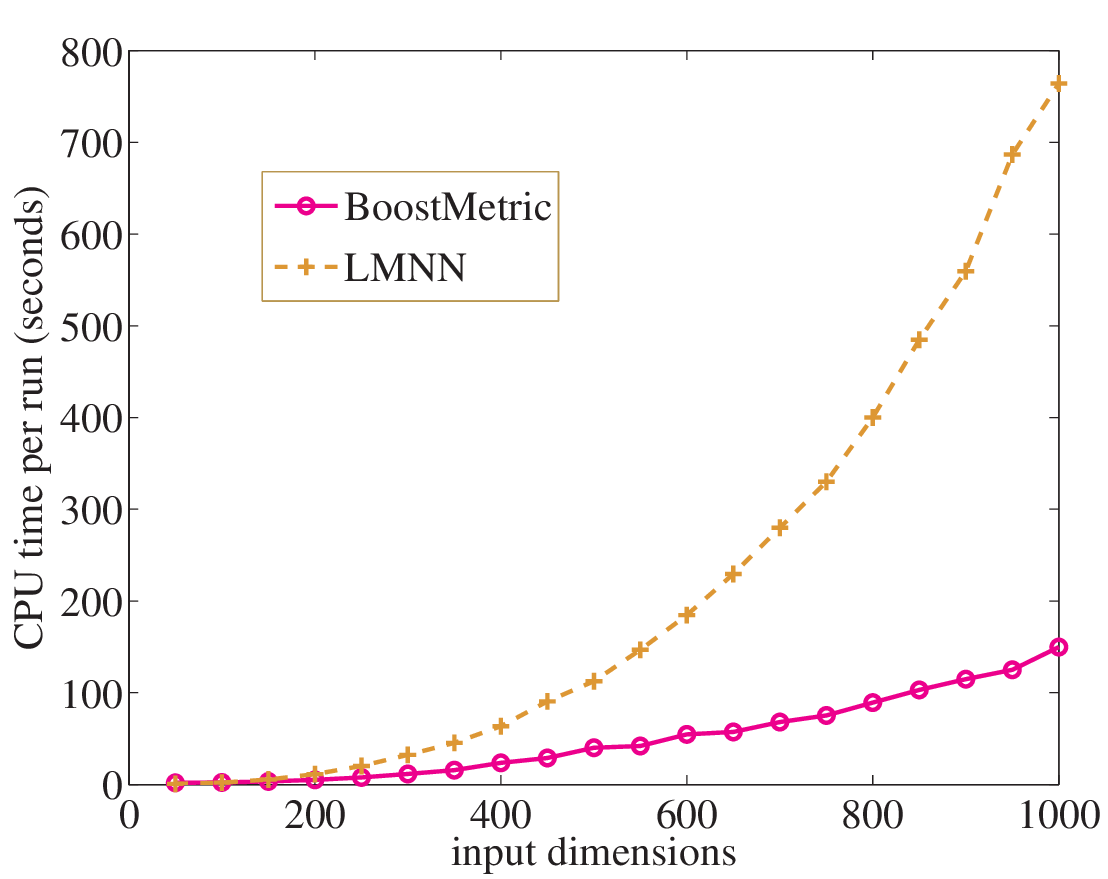}
    \caption{ 
    Computation time of the proposed \BoostMetric (stage-wise, exponential loss) 
    and the LMNN method \vs the input data's
    dimensions on an artificial dataset.  \BoostMetric is faster than LMNN with large input
    dimensions because at each iteration \BoostMetric only needs to calculate the largest
    eigenvector and LMNN needs a full eigen-decomposition. 
    }
    \label{fig:cputime}
\end{figure}

\subsection{Visual Object Categorization}
        
      In the following experiments, unless otherwise specified,   
      \BoostMetric means the stage-wise \BoostMetric with the exponential loss. 

      The proposed \BoostMetric and the LMNN are further compared on visual object categorization tasks. 
      The first experiment uses four classes of the Caltech-101 object recognition database \cite{Feifei2006Oneshot},
      including Motorbikes ($798$ images),
      Airplanes ($800$), Faces ($435$), and Background-Google
      ($520$). The task is to label each image according to the presence of a
      particular object. This experiment involves both object categorization
      (Motorbikes \vs Airplanes)
      and object retrieval (Faces \vs Background-Google)
      problems. In the second experiment, we compare the two methods on the MSRC
      data set including $240$ images\footnote{See 
\url{http://research. microsoft. com / en-us / projects / objectclassrecognition/}.
}.
    The objects in the images can be categorized into nine classes, including
    \textit{building, grass, tree, cow, sky, airplane, face, car and bicycle}.
    Different from the first experiment, each image in this database often
    contains multiple objects. The regions corresponding to each object have
    been manually pre-segmented, and the task is to label each region according
    to the presence of a particular object. Some examples are shown in
    Fig.~\ref{fig:objcate-examples}.

\begin{figure*}[t!]
\centering
\begin{tabular}{cccc}
\includegraphics[height=1.97698765cm]{./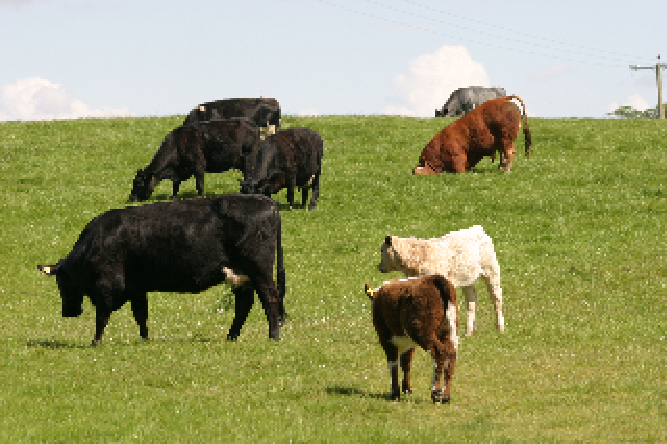} 
\includegraphics[height=1.97698765cm]{./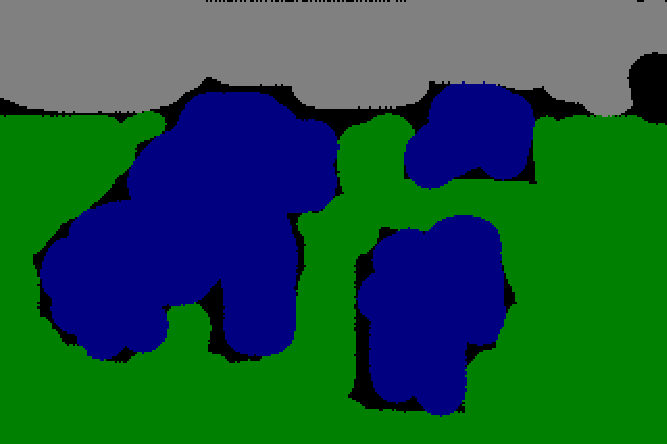}
\includegraphics[height=1.97698765cm]{./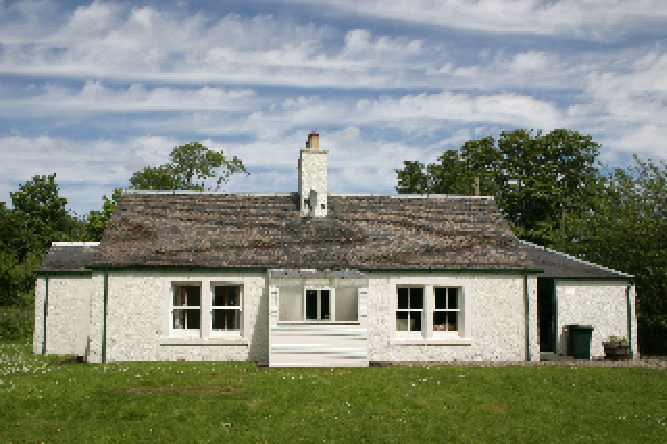}
\includegraphics[height=1.97698765cm]{./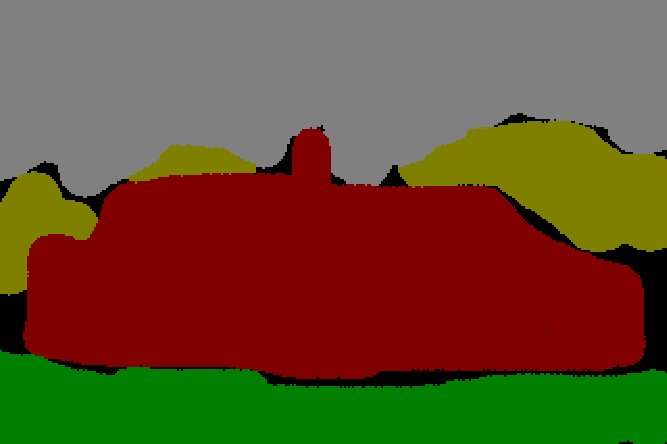}\\
\includegraphics[height=1.97698765cm]{./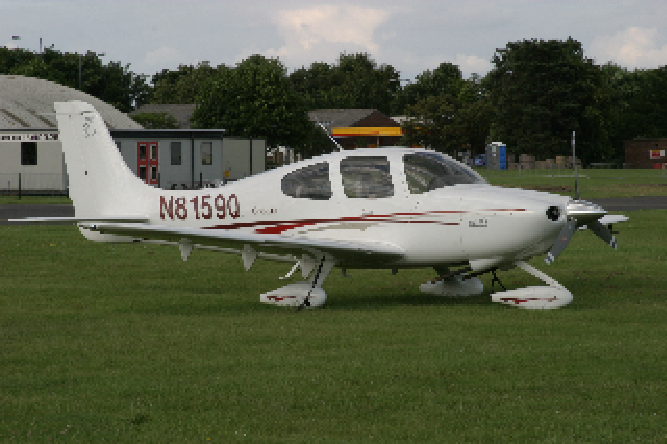}
\includegraphics[height=1.97698765cm]{./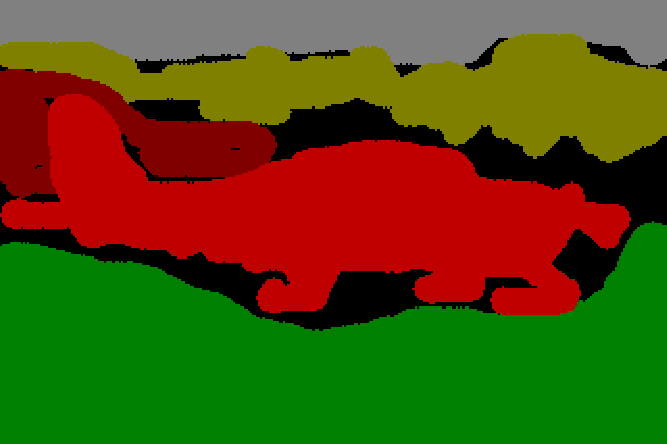}
\includegraphics[height=1.97698765cm]{./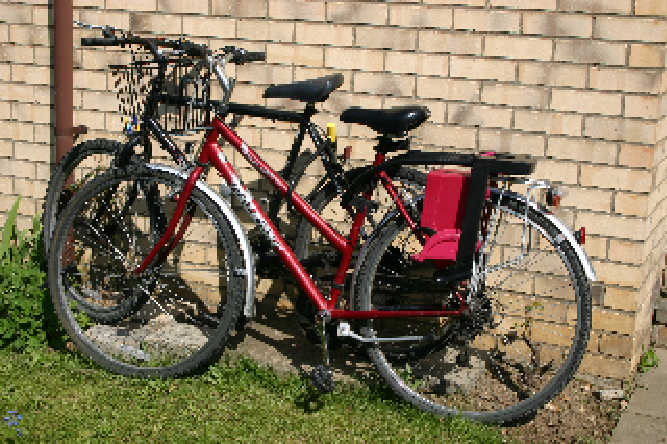}
\includegraphics[height=1.97698765cm]{./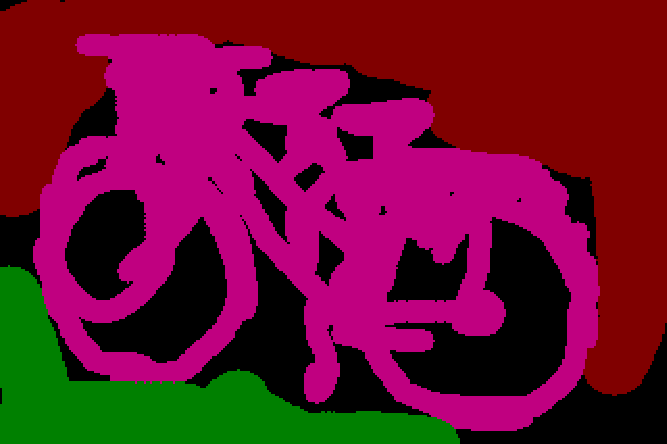}
\end{tabular}
\caption{Examples of the images in the MSRC data set and the pre-segmented regions labeled using different colors.}
\label{fig:objcate-examples}
\end{figure*}

\subsubsection{Experiment on the Caltech-101 dataset}

      For each image of the four classes, a number of interest regions are identified by
      the Harris-affine detector \cite{Mikolajczyk2004Scale}
      and each region is characterized by the SIFT
      descriptor \cite{lowe2004sift}. The total number of interest regions 
      extracted from the four classes are
      about $134,000$, $84,000$, $57,000$, and $293,000$, respectively.
      To accumulate statistics, the images of two involved
      object classes are randomly split as $10$ pairs of training/test
      subsets. Restricted to the images in a training subset (those in
      a test subset are only used for test), their local descriptors
      are clustered to form visual words by using $k$-means
      clustering. Each image is then represented by a histogram
      containing the number of occurrences of each visual word.

      \textit{Motorbikes versus Airplanes} This experiment discriminates the images of a motorbike from
      those of an airplane. In each of the $10$ pairs of training/test
      subsets, there are $959$ training images and $639$ test images.
      Two visual codebooks of size $100$ and $200$ are
      used, respectively. With the resulting histograms, the
proposed \BoostMetric and the LMNN are learned on a training subset and evaluated on the corresponding
test subset. Their averaged classification error rates are compared in
Fig.~\ref{fig:motor} (left).
For both visual codebooks, the proposed \BoostMetric achieves lower error rates than
the LMNN and the Euclidean distance, demonstrating its superior
performance. We also apply a linear SVM classifier with its regularization
parameter carefully tuned by $5$-fold cross-validation.
Its error rates are $3.87\%\pm 0.69\%$ and $ 3.00\% \pm 0.72\% $ on
the two visual codebooks, respectively.
In contrast, a $3$NN with \BoostMetric has error rates
$ 3.63  \% \pm  0.68 \%$
and
$2.96 \% \pm 0.59\%$.
Hence, the performance of the proposed \BoostMetric
is comparable to the state-of-the-art SVM classifier.  Also, Fig.~\ref{fig:motor}
(right) plots
the test error of the \BoostMetric against
the number of triplets for training. The general trend is that more triplets lead to
smaller errors.

\begin{figure}[t!]
    \centering
    {
    \includegraphics[width=0.43\textwidth]
                    {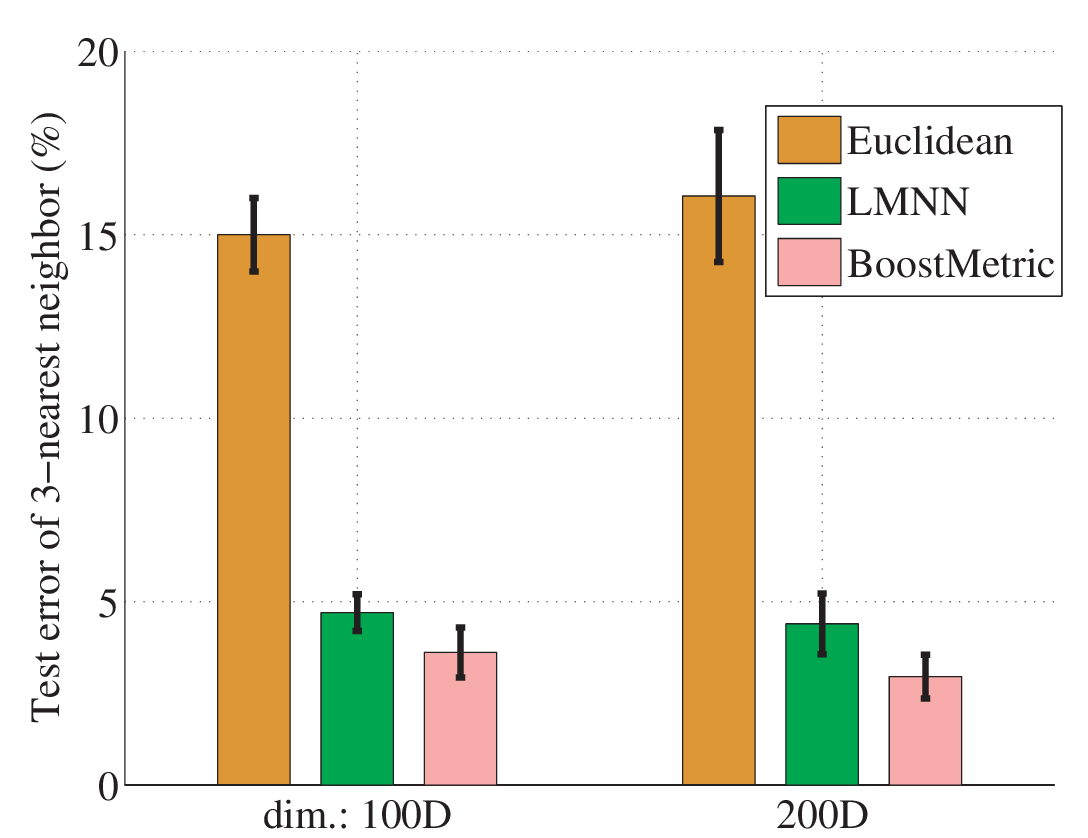}  
    \includegraphics[width=0.43\textwidth]
                    {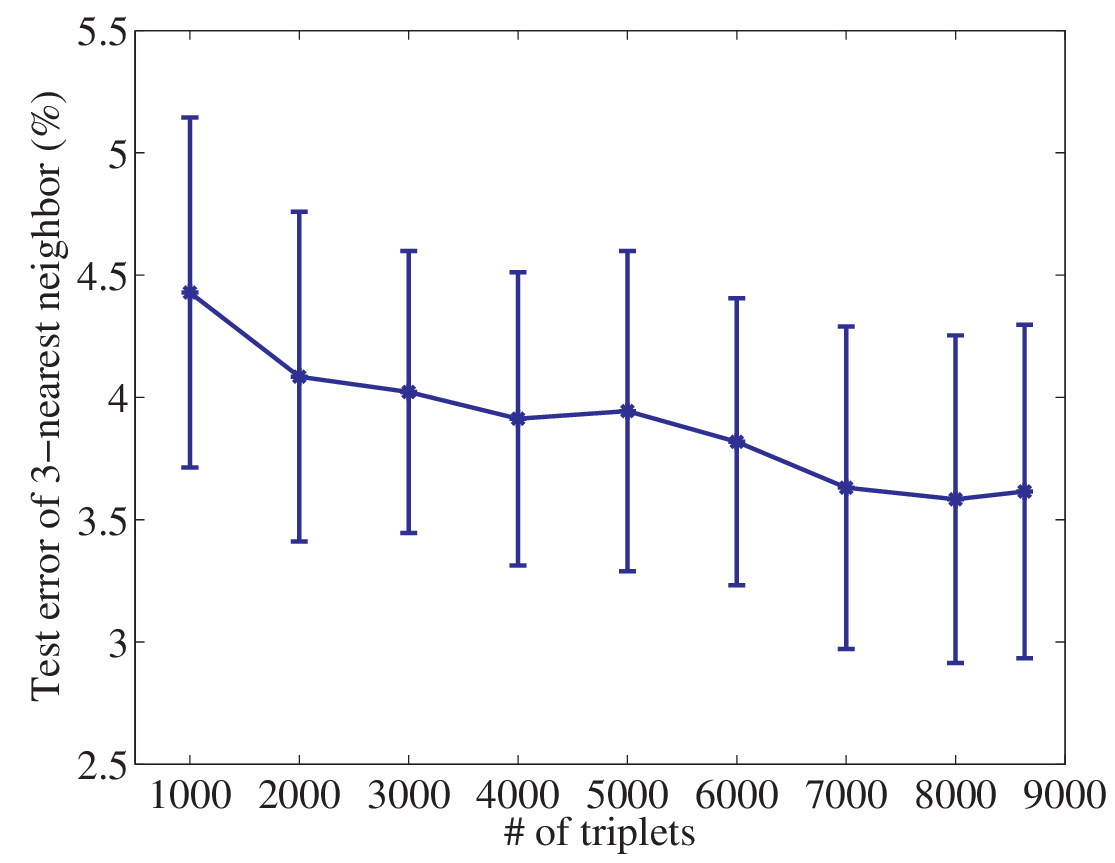}
    }
    \caption{ 
    Test error ($3$-nearest neighbor) of \BoostMetric on the Motorbikes \vs
    Airplanes 
    datasets. The second plot shows the test error against the
    number of training triplets with a 
    $100$-word codebook.
    }
    \label{fig:motor}
\end{figure}

       \textit{Faces versus Background-Google} This experiment uses
       the two object classes as a retrieval problem.  The target of
       retrieval is  face images. The images in the class of
       Background-Google are randomly collected from the Internet and
       they represent the non-target class.  \BoostMetric is first
       learned from a training subset and retrieval is conducted on
       the corresponding test subset. In each of the $10$
       training/test subsets, there are $573$ training images and
       $382$ test images. Again, two visual codebooks of size $100$
       and $200$ are used. Each face image in a test subset is used as
       a query, and its distances from other test images are
       calculated by the proposed BoostMetric,  LMNN and the Euclidean
       distance, respectively. For each metric, the \textit{Precision}
       of the retrieved top $5$, $10$, $15$ and $20$ images are
       computed. The \textit{Precision} values from each query are
       averaged on this test subset and then averaged over the 
       $10$ test subsets.  The retrieval precision of these metrics
       is  shown in Fig.~\ref{fig:face} (with a codebook size $ 100$).
       As we can see that the \BoostMetric consistently attains the highest
       values on both visual codebooks, which again verifies its
       advantages over  LMNN and Euclidean distance. With a
       codebook size $ 200$, very similar results are obtained.

\subsubsection{Experiment on the MSRC dataset}

            The $240$ images of the MSRC database are randomly halved
            into $10$ groups of training and test sets. Given a set of
            training images, the task is to predict the class label
            for each of the pre-segmented regions in a test image. We
            follow the work in ~\cite{Winn05objectcategorization} to
            extract features and conduct experiments.   Specifically,
            each image is converted from the RGB color space to the
            CIE Lab color space. First, three Gaussian low-pass
            filters are applied to the $L$, $a$, and $b$ channels,
            respectively.  The standard deviation $\sigma$ of the
            filters are set to $1$, $2$, and $4$, respectively, and
            the filter size is defined as $4\sigma$.  This step
            produces $9$ filter responses for each pixel in an image.
            Second, three Laplacian of Gaussian~(LoG) filters are
            applied to the $L$ channel only, with $\sigma = 1, 2, 4,
            8$ and the filter size of $4\sigma$. This step gives rise
            to $4$ filter responses for each pixel. Lastly, the first
            derivatives of the Gaussian filter with $\sigma = 2, 4$
            are computed from the $L$ channel along the row and column
            directions, respectively. This results in $4$ more filter
            responses. After applying this set of filter banks, each
            pixel is represented by a $17$-dimensional feature
            vectors. All the feature vectors from a training set are
            clustered using the $k$-means clustering with a
            Mahalanobis distance\footnote{Note that this Mahalanobis
            distance is different from the one that we are going to
            learn with the \BoostMetric.}. By setting $k$ to $2000$, a
            visual codebook of $2000$ visual words is obtained. We
            implement the word-merging approach in
            ~\cite{Winn05objectcategorization} and obtain a compact
            and discriminative codebook of $300$ visual words. Each
            pre-segmented object region is then represented as a
            $300$-dimensional histogram.

\begin{figure}[t!]
    \centering
    \includegraphics[width=0.5\textwidth]
                    {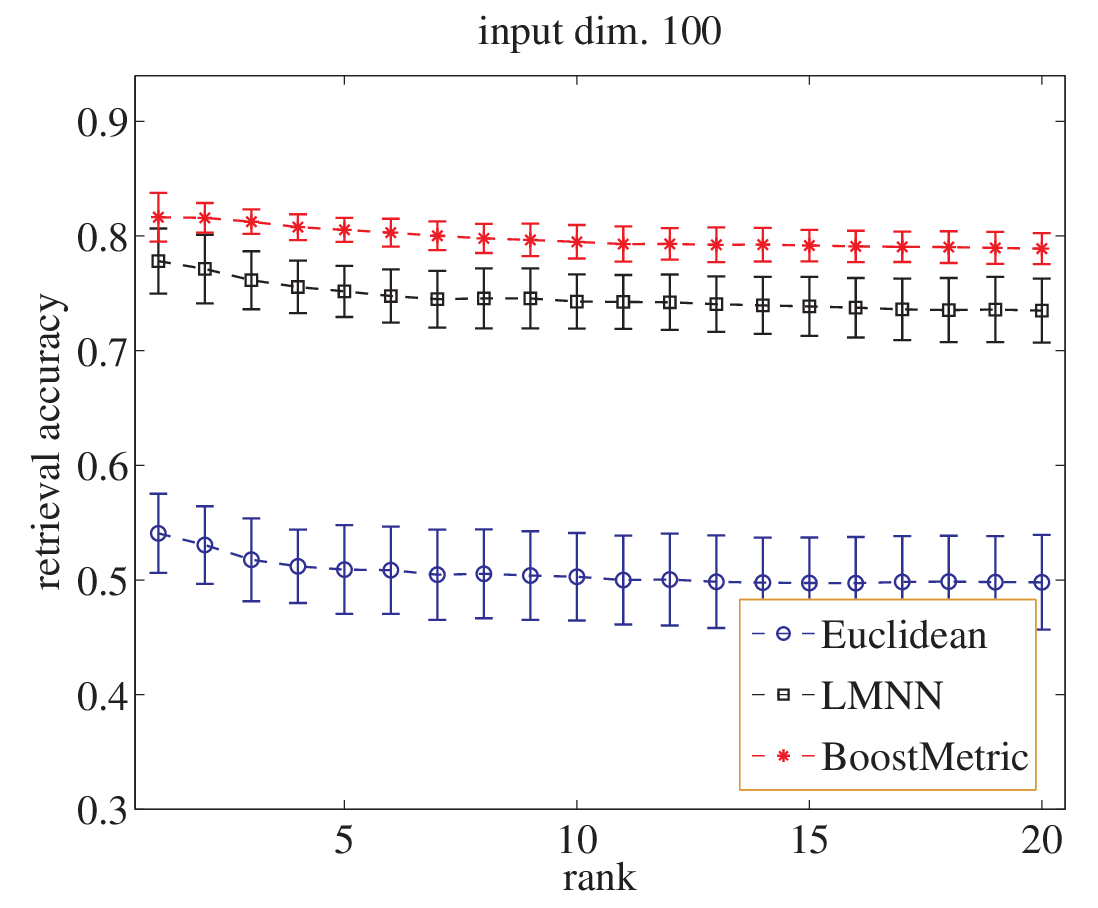}
     \caption{ 
    Retrieval accuracy of distance metric learning algorithms
    on the Faces \vs  B\-a\-ck\-gr\-ound-G\-oo\-g\-le dataset.
    Error bars show the standard
    deviation. 
    }
    \label{fig:face}
\end{figure}

The proposed \BoostMetric is compared with the LMNN algorithm as follows.
With $10$ nearest neighbors information, about $20,000$ triplets are
constructed and used to train the \BoostMetric. To ensure
convergence, the maximum number of iterations is set as $5000$ in
the optimization of training \BoostMetric. The training of  LMNN
 follows the default setting.  $k$NN
classifiers with the two learned Mahalanobis distances and the
Euclidean distance are applied to each training and test group to
categorize an object region. The categorization error rate on each
test group is summarized in Table~\ref{Table:objcate-acc}. As
expected, both learned Mahalanobis distances achieve superior
categorization performance to the Euclidean distance. Moreover, the
proposed \BoostMetric achieves better performance than the LMNN, as
indicated by its lower average categorization error rate and the
smaller standard deviation. Also, the $k$NN classifier using the
proposed \BoostMetric achieves comparable or even higher
categorization performance than those reported in
~\cite{Winn05objectcategorization}. Besides the categorization
performance, we compare the computational efficiency of the
\BoostMetric and the LMNN in learning a Mahalanobis distance. The
computational time result is based on the Matlab codes for both methods.
    In this experiment, the average time cost by the \BoostMetric for learning the
    Mahalanobis distance is $3.98$ hours, whereas the LMNN takes
    about $8.06$ hours to complete this process. Hence, the proposed
    \BoostMetric has a shorter training process than the LMNN method.
    This again demonstrates the computational advantage of the
    \BoostMetric over the LMNN method.

\begin{table}
\centering
 {\small
\begin{tabular}{c|c|c|c}

  \hline
  group index  & Euclidean & LMNN & \BoostMetric \\
    \hline \hline
  1     &   9.19    &   6.71    &     {4.59}    \\
  2     &   5.78    &   3.97    &     {3.25}    \\
  3     &   6.69    &   2.97    &     {2.60}    \\
  4     &   5.54    &   {3.69}  &     4.43  \\
  5     &   6.52    &   5.80        &     {4.35}    \\
  6     &   7.30    &   4.01    &     {3.28}    \\
  7     &   7.75    &   {2.21}  &     2.58  \\
  8     &   7.20    &   {4.17}  &     4.55  \\
  9     &   6.13    &   {3.07}  &     4.21  \\
 10     &   8.42    &   {5.13}  &     5.86  \\
 \hline \hline
 average:   &   7.05    &   4.17    &   \textbf{3.97}   \\
 standard devision:   &   1.16    &   1.37    &   \textbf{1.03} \\
 \hline
\end{tabular}
}
\caption{Comparison of the categorization performance.} 
\label{Table:objcate-acc}
\end{table}

\subsection{Unconstrained Face Recognition}

We  use the ``labeled faces in the wild'' (LFW)
dataset~\cite{LFWTech} for face recognition in this experiment.
\begin{figure}[t]
\centering
\begin{tabular}{ccc | ccc}
\includegraphics[height=1.9975cm]{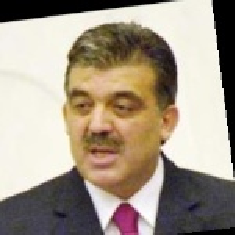}    &
\includegraphics[height=1.9975cm]{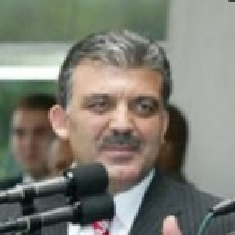}    &
\includegraphics[height=1.9975cm]{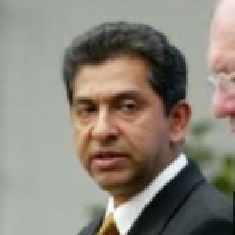}  &
\includegraphics[height=1.9975cm]{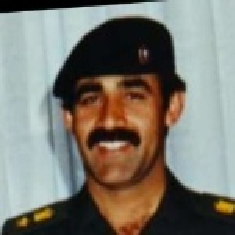}  &
\includegraphics[height=1.9975cm]{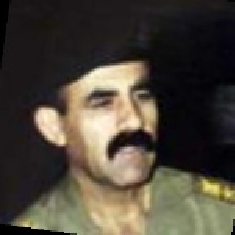}  &
\includegraphics[height=1.9975cm]{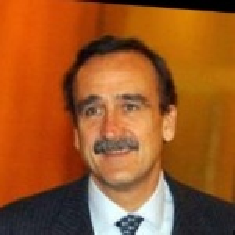} \\
\includegraphics[height=1.9975cm]{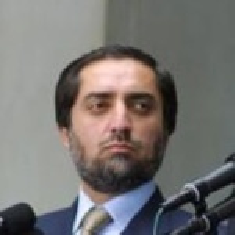}&
\includegraphics[height=1.9975cm]{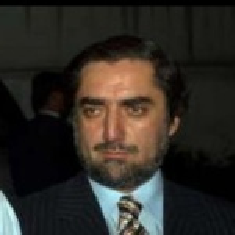}&
\includegraphics[height=1.9975cm]{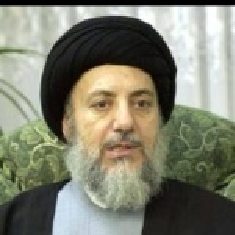}&
\includegraphics[height=1.9975cm]{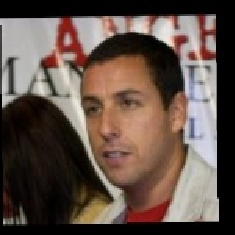}&
\includegraphics[height=1.9975cm]{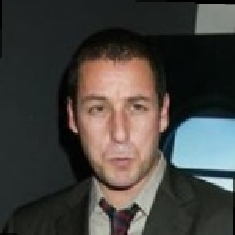}&
\includegraphics[height=1.9975cm]{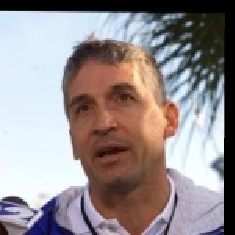}
\end{tabular}
\caption{Four generated triplets based on the pairwise
information provided by the LFW data set. For the three images in each
triplet,
the first two belong to the
same individual and the third one is a different individual.}
\label{fig:LFW_trip}
\end{figure}

This is a data set of unconstrained face images, which has a large
range of variations seen in real world, including $13,233$ images
of $5,749$ people collected from news articles on Internet.  The
face recognition task here is {\em pair matching}---given two face images,
to determine if these two images are of the same individual. So we
classify unseen pairs to determine whether each image in the pair indicates the same
individual or not, by applying M$k$NN of \cite{Guillaumin09isthat}
instead of $k$NN.

Features of face images are extracted by computing $3$-scale,
$128$-dimensional SIFT descriptors \cite{lowe2004sift}, which center
on $9$ points of facial features extracted by a facial feature
descriptor, same as described in \cite{Guillaumin09isthat}. PCA is
then performed on the SIFT vectors to reduce the dimension to
between $100$ and $400$.

\begin{table}[t]
\centering { \small
\begin{tabular}{ c ||c|c|c|c}
        \hline
 {number of triplets}  & 100D & 200D & 300D& 400D\\
\hline \hline
 3,000  & 80.91 (1.76) & 82.39 (1.73) & 83.40 (1.46) & 83.64 (1.66)\\ %

 6,000 & 81.13 (1.76) & 82.59 (1.84) & 83.58 (1.25) & 83.70 (1.73)\\ %

 9,000 & 81.01 (1.69) & 82.63 (1.68) & 83.65 (1.70) & 83.72 (1.47)\\ %

12,000 & 81.06 (1.63) & 83.00 (1.38) & 83.60 (1.89) & 83.57 (1.47)\\

15,000 & 81.10 (1.71) & 82.78 (1.83) & 83.69 (1.62) & 83.80 (1.85)\\

18,000 & 81.37 (2.15) & 83.19 (1.76) & 83.60 (1.66) & 83.81 (1.55)\\
\hline 
\end{tabular}
}
\caption{ Comparison of the face recognition accuracy (\%) of
our proposed \BoostMetric on the LFW dataset by varying the PCA dimensionality
and the number of triplets for each fold. }
\label{Table:lfw_error}
\end{table}

\begin{figure}[th]
\centering
\begin{tabular}{c}
\includegraphics[width=0.65\textwidth]{./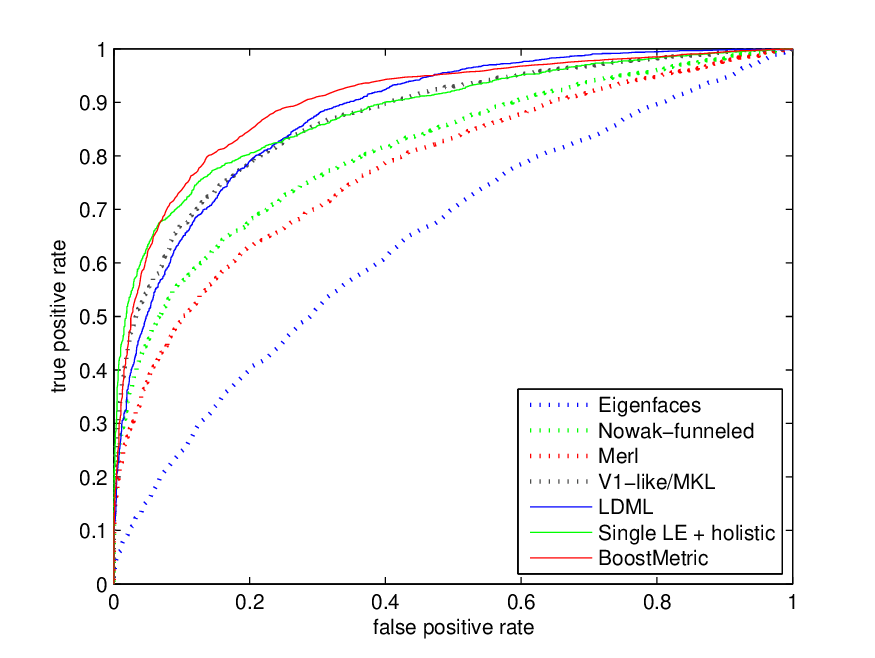}
\\
\includegraphics[width=0.65\textwidth]{./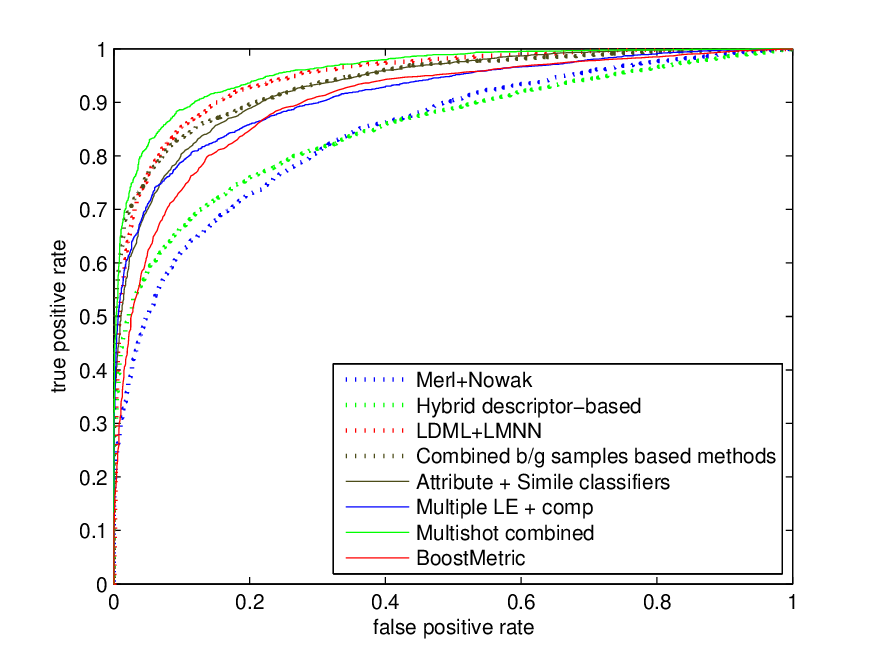}
\end{tabular}
\caption{ (top) ROC Curves that use a single descriptor and a single
classifier, (bottom) ROC curves that use hybrid descriptors are
plotted. Our \BoostMetric
with a single classifier is also plotted. 
Each point on the curves
is the average over the 10 folds of rates for a fixed threshold.}
\label{fig:LFW_ROC}
\end{figure}

\textit{Simple recognition systems with a single descriptor}
    Table~\ref{Table:lfw_error} shows our
    \BoostMetric's performance by varying PCA dimensionality and the
    number of triplets. Increasing the number of training triplets gives
    slight improvement of recognition accuracy.  The dimension after
    PCA has more impact on the final accuracy for this task.

    In Fig.~\ref{fig:LFW_ROC}, we have drawn ROC curves of other
    algorithms for face recognition. To obtain our ROC curve, M$k$NN has
    moved the threshold value across the distributions of match and
    mismatch similarity scores. Fig.~\ref{fig:LFW_ROC} (a) shows
    methods that use a single descriptor and a single classifier only. As
    can be seen, our system using \BoostMetric outperforms all the others
    in the literature with a very small computational cost.

\textit{Complex recognition systems with one or more descriptors}
    Fig.~\ref{fig:LFW_ROC} (b) plots the performance of more complicated
    recognition systems that use hybrid descriptors or combination of
    classifiers. See Table \ref{Table:lfw_error2} for details.
    We can see that the performance of our \BoostMetric is close 
    to the state-of-the-art.

    In particular, \BoostMetric outperforms the method of 
    \cite{Guillaumin09isthat}, which has a similar pipeline but uses
    LMNN for learning a metric. 
    This comparison also confirms the importance of learning an appropriate 
    metric for  vision problems.

\begin{table*}[tbh!]
\centering { \small%
\begin{tabular}{r||l|l}
        \hline
  & single descriptor + single classifier & multiple descriptors/classifiers\\
\hline \hline

\cite{Turk1991} & 60.02 (0.79) & - \\
 & {`Eigenfaces'} &  \\
\hline

\cite{Nowak2007} & 73.93 (0.49) & - \\
 & {`Nowak-funneled'} &  \\
\hline

\cite{Huang2008} & 70.52 (0.60) &  76.18 (0.58)\\
 & {`Merl'} & {`Merl+Nowak'} \\
\hline

\cite{Wolf2008} & - &  78.47 (0.51)\\
 &  & {`Hybrid descriptor-based'} \\
\hline

\cite{Wolf2009} & 72.02 & 86.83 (0.34) \\
& - & {`Combined b/g samples based'}\\
\hline

\cite{Pinto2009} & 79.35 (0.55) &  - \\
 & {`V1-like/MKL'} &  \\
\hline

\cite{Taigman2009} & 83.20 (0.77) & \textbf{89.50 (0.40)}\\
 & - & {`Multishot combined'}\\
\hline

\cite{Kumar2009} & - & 85.29 (1.23)\\
 & & {`attribute + simile classifiers'}\\
\hline

\cite{Cao2010} & 81.22 (0.53) & 84.45 (0.46) \\
 & {`single LE + holistic'} & {`multiple LE + comp'} \\
\hline

\cite{Guillaumin09isthat} & 83.2 (0.4) &  87.5 (0.4)\\
 & {`LDML'} & {`LMNN + LDML'}\\
\hline

\BoostMetric  & \textbf{83.81 (1.55)} & - \\
 & {`\BoostMetric' on SIFT } &  \\
\hline

\end{tabular}
}
\caption{Test accuracy in percentage (mean and standard deviation) on
the LFW dataset. ROC curve labels in
Fig.~\ref{fig:LFW_ROC} are described here with details.
}
\label{Table:lfw_error2}
\end{table*}

\section{Conclusion}

        \label{sec:conc}

        We have presented a new algorithm, \BoostMetric, to
        learn a positive semidefinite metric using boosting
        techniques. We have generalized AdaBoost in the sense
        that the weak learner of \BoostMetric is a matrix, rather
        than a classifier.
        Our algorithm is simple and efficient.
        Experiments show its better performance over
        a few state-of-the-art existing
        metric learning methods.
        We are currently combining the idea of on-line
        learning into \BoostMetric to make it handle even
        larger data sets.

        We also want to  
        learn a metric using \BoostMetric  in the semi-supervised, 
        and multi-task learning setting.
        It has been shown in \cite{Weinberger2009Distance} that 
        the classification performance can be improved by learning multiple 
        local metrics. We will extend \BoostMetric to learn multiple
        metrics. Finally, we will explore to generalize \BoostMetric for solving 
        more general semidefinite matrix learning problems in machine learning.

\end{document}